\documentclass{article}

\usepackage[T1]{fontenc}

\usepackage{microtype}
\usepackage{graphicx}
\usepackage{subfigure}
\usepackage{booktabs} 
\usepackage[dvipsnames]{xcolor}
\usepackage[table]{xcolor}
\usepackage{booktabs}
\usepackage{multirow}
\usepackage{floatrow}
\usepackage{makecell}
\usepackage{hyperref}

\usepackage[accepted]{icml2026}

\usepackage{amsmath}
\usepackage{amssymb}
\usepackage{mathtools}
\usepackage{amsthm}
\usepackage{threeparttable}
\usepackage[capitalize,noabbrev]{cleveref}

\theoremstyle{plain}

\newtheorem{theorem}{Theorem}
\newtheorem{assumption}{Assumption}
\newtheorem{lemma}{Lemma}
\newtheorem{corollary}{Corollary}

\newtheorem{definition}{Definition}
\newtheorem{remark}{Remark}

\usepackage[textsize=tiny]{todonotes}

\usepackage{amsmath,amsfonts,bm}

\def\eqref#1{equation~\ref{#1}}

\def\1{\bm{1}}

\DeclareMathAlphabet{\mathsfit}{\encodingdefault}{\sfdefault}{m}{sl}
\SetMathAlphabet{\mathsfit}{bold}{\encodingdefault}{\sfdefault}{bx}{n}

\DeclareMathOperator*{\argmin}{arg\,min}

\newcommand{\up}[1]{\scriptsize \textcolor[HTML]{1F77B4}{$\uparrow$~#1}}
\newcommand{\down}[1]{\scriptsize \textcolor[HTML]{D62728}{$\downarrow$~#1}}
\newcommand{\best}[1]{\scriptsize \textcolor[HTML]{0565a8}{\textbf{$\uparrow$~#1}}}

\icmltitlerunning{Causal-JEPA: Learning World Models through Object-Level Latent Masking}

\begin{document}

\twocolumn[
\icmltitle{Causal-JEPA: Learning World Models through \\Object-Level Latent Masking}



\icmlsetsymbol{equal}{*}

\begin{icmlauthorlist}
\icmlauthor{Heejeong Nam}{brown}
\icmlauthor{Quentin Le Lidec}{nyu,equal}
\icmlauthor{Lucas Maes}{mila,equal}
\icmlauthor{Yann LeCun}{nyu}
\icmlauthor{Randall Balestriero}{brown}
\end{icmlauthorlist}

\icmlaffiliation{brown}{Brown University, GalilAI}
\icmlaffiliation{mila}{Mila \& Université de Montréal}
\icmlaffiliation{nyu}{New York University}

\icmlcorrespondingauthor{Heejeong Nam}{heejeong\_nam@brown.edu}
\icmlcorrespondingauthor{Lucas Maes}{lucas.maes@mila.quebec}

\icmlkeywords{Machine Learning, ICML}

\vskip 0.3in
]



\printAffiliationsAndNotice{\icmlEqualContribution} 

\begin{abstract}
World models require robust relational understanding to support prediction, reasoning, and control. While object-centric representations provide a useful abstraction, they are not sufficient to capture interaction-dependent dynamics. We therefore propose C-JEPA, a simple and flexible object-centric world model that extends masked joint embedding prediction from image patches to object-centric representations. 
By masking object-level latents and requiring each masked object state to be inferred from the surrounding context, C-JEPA imposes structured partial observability during training, creating counterfactual-like prediction queries that discourage shortcut solutions and make interaction-dependent prediction necessary under the learning objective.
Empirically, C-JEPA leads to consistent gains in visual question answering, with an absolute improvement of about 20\% in counterfactual reasoning over the same architecture without object-level masking. On agent control tasks, C-JEPA enables substantially more efficient planning by using only 1\% of the total latent input features required by patch-based world models, while achieving comparable performance. Finally, we provide a formal analysis demonstrating that object-level masking induces useful inductive bias by controlling observability. Our code is available at \href{https://github.com/galilai-group/cjepa}{https://github.com/galilai-group/cjepa}.
\end{abstract}

\begin{figure}[t]
\centering
\includegraphics[width=\textwidth]{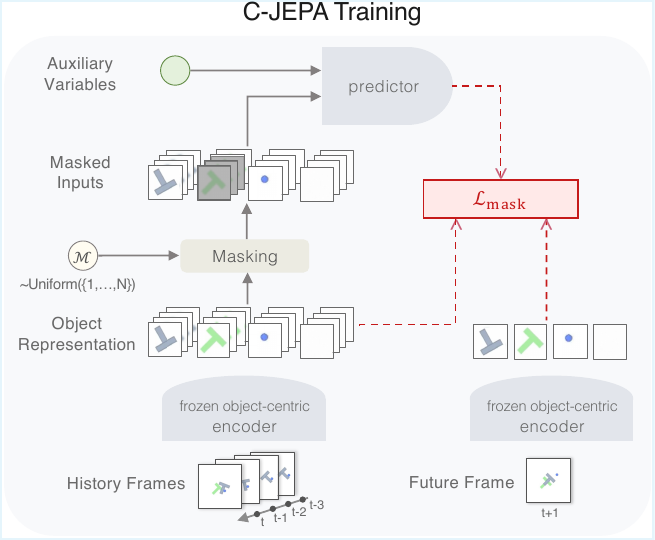}
\caption{
C-JEPA training pipeline.
A frozen encoder extracts object-centric representations, followed by selective masking across history.
The predictor recovers masked history slots and predicts future latent states, conditioned on optional auxiliary variables, via a joint masked-history and forward-prediction objective.
}
\label{fig:training}
\end{figure}

\section{Introduction}
\label{sec:intro}

World models~\cite{45} provide a unifying abstraction for learning, predicting, and reasoning about the dynamics of complex environments, enabling scalable planning and control directly in latent space from high-dimensional observations~\cite{46,47}.
At their core, effective world modeling benefits from representations that expose entities and their interactions, avoiding reliance on spurious pixel-level correlations~\cite{29, 40, 32, 38, 3}.
Inspired by this perspective, object-centric models have been extensively developed for learning visual dynamics~\cite{9,10, 11, 1, 2}, and more recently adopted as a foundation for learning world models~\cite{32, 40}. However, learning on top of object-centric representations alone is not sufficient. Existing works have shown that, without explicit mechanisms to guide interaction learning, models can easily fall back on object self-dynamics or exploit incidental correlations~\cite{2,3}. 
This realization has spurred growing interest in object-centric world models that explicitly learn interactions~\cite{33,38,1} or causal structure~\cite{29,40,3}.
These approaches often enforce interactions by separating temporal dynamics from object interactions~\cite{1,2,38}, regularizing attention sparsity~\cite{3}, leveraging graph structures~\cite{33}, or relying on downstream-specific methods~\cite{32,40}.
Together, these works emphasize the importance of interactions in world modeling, while leaving open the question of how interaction structure can be made functionally necessary through the learning objective itself.

Accordingly, we propose Causal-JEPA (C-JEPA), a simple yet effective approach for object-centric world modeling with a causal inductive bias induced by object-level masking interventions on predictor observability. This design is motivated by the fact that existing patch-based masked prediction approaches~\cite{20,58,59,57,21} optimize for local patch correlations, without enforcing object-level interaction reasoning. We propose an object-level masking that trains the model to infer an object’s latent trajectory from others' evolving states, inducing counterfactual-like interventions during learning.
Our masking strategy prevents shortcut solutions such as trivial temporal interpolation, making interaction reasoning necessary for minimizing the prediction objective.
At the same time, C-JEPA remains architecturally flexible and can incorporate auxiliary variables such as actions and proprioception.

Notably, the compactness of object-centric representations, when combined with the Joint Embedding Predictive Architecture (JEPA)~\cite{12}, encourages the model to concentrate on the essential factors governing object dynamics within a low-dimensional latent space, thereby reducing both computational and memory overhead.
In contrast to patch-based predictors, where attention scales quadratically with a large number of patch tokens, entity-level representations apply the same quadratic attention mechanism over a much smaller token set, resulting in significantly lower training and inference costs.

In Sec.~\ref{sec:exp}, we evaluate C-JEPA from two perspectives, visual reasoning and predictive control.
We first evaluate our approach on CLEVRER~\cite{42}, a video QA benchmark involving multi-object interactions, and show that object-level masking leads to consistent improvements.
In particular, C-JEPA improves visual question answering, with a roughly 20\% absolute gain on counterfactual reasoning over the same architecture without object-level masking.
We further evaluate C-JEPA in a model predictive control (MPC) setting on the Push-T manipulation task. Compared to patch-based world models such as DINO-WM~\cite{16}, C-JEPA achieves comparable task performance while using only 1.02\% of the total input feature size, which translates to more than 8$\times$ faster model predictive control.
Finally, in Sec.~\ref{sec:theory}, we formally characterize object-level masked prediction as a inductive bias, showing that object masking make interaction reasoning necessary for minimizing the training objective. Taken together, this leads to four main contributions:

\begin{itemize}
    \item We introduce C-JEPA, an object-centric world model that uses object-level masking as a latent intervention, encouraging interaction-dependent prediction.
    \item C-JEPA consistently improves visual reasoning over multi-object interactions, with particularly large gains on counterfactual questions.
    \item C-JEPA enables highly efficient predictive control, achieving comparable performance to patch-based world models while using orders-of-magnitude fewer tokens and substantially faster MPC planning.
    \item We provide an analysis of how object-level masked prediction can induce a causal inductive bias, making interaction reasoning functionally necessary.
\end{itemize}

\section{Related Works}
\label{sec:relwork}

\subsection{Structured World Models for Dynamics Learning}

World modeling in dynamical scenes requires structured representations capturing entities and their interactions.
Some approaches rely on weak supervision over entity information~\cite{29,32,3} or reinforcement-learning–specific architectures~\cite{32,40}, while others build self-supervised and generally applicable world models using object-centric representations.
For example, several methods explicitly decompose self-dynamics and interactions through architectural factorization~\cite{38,2,1}.
Subsequent work move beyond reconstruction to reduce redundancy in learned representations. While reconstruction-free, SlotFormer~\cite{11} does not explicitly enforce interaction structure, and C-SWM~\cite{33} relies on a fixed relational graph, limiting adaptability.
SPARTAN~\cite{3} learns world models directly in latent space and encourages interaction selectivity via sparse attention.
However, none of these works induces interaction-aware structure through the learning objective itself.


\subsection{Masking-Based Representation Learning}
Masked image modeling was originally introduced as a scalable self-supervised learning paradigm to improve representation quality~\cite{20,21}.
Subsequent work explored guided masking strategies to enhance representations by selectively masking regions informed by motion or dynamics~\cite{58,59,57}.
\citet{7} combines masking with object-centric representations, but applies masking to image patch tokens for computational efficiency, while object representations are used only as conditioning signals.
Masking has also been used in causal discovery to identify latent structure~\cite{31,43,63}, but these methods typically assume fixed or identifiable graphs and focus on disentanglement or structure recovery, rather than flexible world modeling.
Finally, \citep{51} interpreted masking as inducing counterfactual variations, which aligns with our use of masking, but they also assumed a structural causal model and focused on robust fine-tuning.
In contrast, our work adopts masking as an object-level latent intervention to shape predictive dependencies in a world model as a principled inductive bias for learning dynamics.

\section{Preliminaries}
\label{sec:prelim}

\paragraph{Joint Embedding Predictive Architecture}
JEPA~\cite{12} defines a self-supervised learning paradigm that learns to predict in representation space without reconstructing pixels, focusing instead on modeling predictive relationships between latent embeddings that capture semantic structure in the data.
This formulation has been progressively extended from image-level masked joint embedding prediction in I-JEPA~\cite{13}, to spatiotemporal tube-based prediction in video with V-JEPA~\cite{14}, and further to integrated understanding, prediction, and planning in V-JEPA2~\cite{15}.
By avoiding reconstruction-driven objectives, JEPA-style models learn representations that are directly aligned with prediction and control, making them particularly well suited for world modeling and autonomous decision-making~\cite{12,13,14,15}.
Following prior works~\cite{16,5}, we employ frozen target encoders and optimize the predictor with a joint embedding prediction objective, aligning predicted latents to target latents. 

\paragraph{Object-Centric Representation via Slot Attention}
Slot Attention~\cite{8} was introduced as a mechanism for learning object-centric representations by iteratively grouping feature maps into a fixed set of slots through competitive attention. 
By alternating between attention-based assignment and slot updates, it produces a set of object-level representations without requiring supervision.
Subsequent work extended this formulation from images to videos~\cite{9, 10, 56, 62}.
Unlike image-based Slot Attention, video extensions must additionally maintain slot identity over time, which is typically achieved by conditioning slot updates on previous-frame slots.
Notably, some approaches such as VideoSAUR~\cite{10} leverage powerful pretrained foundation models, including DINO~\cite{17} and DINOv2~\cite{18}, as feature encoders, inheriting strong semantic and invariance properties from large-scale pretraining.
In our framework, we use Slot Attention to encode each observation into a fixed-size set of object-centric latent states, yielding a set representation $S_t = \{ s_t^1, \dots, s_t^N \}$, where each slot corresponds to a distinct entity in the scene.


\section{Method}
\label{sec:method}

In this section, we introduce C-JEPA, an object-centric latent world model that applies object-level masking during training to induce interaction-aware predictive representations. All notation is summarized in Appendix~\ref{app:notation}. 

\subsection{Problem Setting}
\label{sec:problem_setting}

We consider a video-based dynamical system observed through a sequence of images.
Let $X_t \in \mathbb{R}^{H \times W \times C}$ denote the pixel-level observation at time $t$.
An object-centric encoder $g$ maps each frame to a set of object-centric slots,
\begin{equation}
S_t = g(X_t) = \{ s_t^1, \dots, s_t^N \}, \qquad s_t^i \in \mathbb{R}^d ,
\end{equation}
where $N$ is the fixed number of slots and $d$ is the dimensionality of each slot.
The set representations $S_t$ are permutation-equivariant with respect to the slot ordering.
We consider a history window of length $T_h$ and a prediction horizon of length $T_p$.
Accordingly, we define the history index set $T := \{t-T_h+1, \dots, t\}$, and the full history–future prediction interval $\mathcal{T} := \{t-T_h+1, \dots, t+T_p\}$.
Given past object states $S_T$, the world model aims to predict future object states $\{S_{t+1}, \dots, S_{t+T_p}\}$.


\paragraph{Masked and Visible Object Sets.}
For any time step $\tau$ in the history window $T$, we partition the object set $S_\tau$ into a masked subset and an unmasked context subset.
Let $\mathcal{M}_\tau \subset \{1,\dots,N\}$ denote the indices of masked objects at time $\tau$. We then define
\begin{equation}
S_\tau^{\mathrm{m}} = \{ s_\tau^i \mid i \in \mathcal{M}_\tau \}, 
\qquad
S_\tau^{\mathrm{c}} = \{ s_\tau^j \mid j \notin \mathcal{M}_\tau \}.
\end{equation}
The masked set $S_\tau^{\mathrm{m}}$ denotes the subset of object states selected for masking,
whose observations are replaced by mask tokens $\tilde{S}_\tau^{\mathrm{m}}$,
while the remaining states $S_\tau^{\mathrm{c}}$ are provided as context.
During training, $f$ learns to recover masked tokens within $S_T$
and to predict future states $\{S_{t+1}, \dots, S_{t+T_p}\}$ under masking.
At inference time, $f$ is used solely for forward prediction from a fully observed history $S_T$, without masking.

\paragraph{Auxiliary Observable Variables.}
In addition to object states, we allow for auxiliary observable variables that influence state transitions.
For example, when actions and proprioceptive signals are available, we denote these variables by $U_t = \{ a_t, p_t\}$, where $a_t$ represents actions and $p_t$ represents proprioceptive signals.
These variables are treated as external to the object-level state representation and incorporated as additional conditioning inputs alongside object states.
This enables the model to capture both object--object dependencies and action-mediated influences within a unified object-centric framework, consistent with treating observable sources as auxiliaries~\cite{30}.

\subsection{Causal-JEPA}
\label{sec:cjepa}

We now describe the core design of C-JEPA, which introduces object-level masking for learning interaction-aware dynamics.
\begin{figure}[t]
\centering
\includegraphics[width=\textwidth]{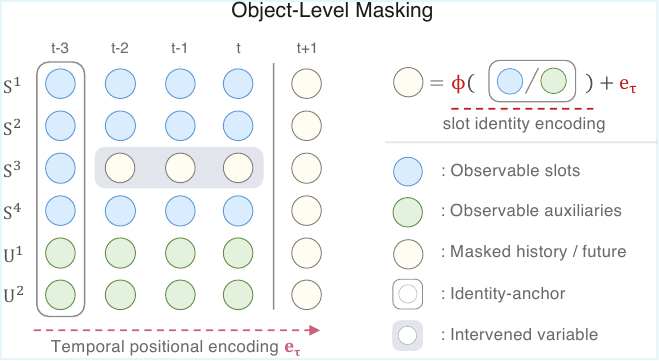}
\caption{
Object-level latent masking in C-JEPA. Selected object slots are masked across time, except for a minimal identity anchor, forcing the predictor to infer object dynamics from interactions with other objects and auxiliary variables.
}
\label{fig:masking}
\end{figure}
\paragraph{Object-Level Masking for Learning Interaction.}
Fig.~\ref{fig:masking} illustrates our object-level masking.
At each time step $t$, the object-centric representation $S_t$, together with auxiliary variables $U_t$, is represented as a set of entity tokens
$Z_t = \{S_t, \, U_t\}$, which serves as the input to the predictor. 

While future entity tokens are always masked for prediction, we additionally mask the observable latent states of selected objects across the history window $T$ according to a masking index set $\mathcal{M}$, preserving only the earliest time step $t_0$ as an identity anchor.
The identity anchor is introduced to address the permutation equivariance of slot-based representations with respect to object ordering, which motivates omitting positional encodings along the entity dimension, consistent with prior work~\cite{11,65}.
As a result, identity information of each entity must be provided for masked tokens, enabling the transformer predictor to distinguish \emph{which} entities are masked and to predict appropriately. 

Concretely, we define the masked token as
\begin{equation}
\label{eq:masking}
\tilde{z}_\tau^i = \phi(z_{t_0}^i) + e_\tau,
\end{equation}
where $\phi$ is a linear projection, $z_{t_0}^i$ serves as an identity anchor, and $e_\tau$ is a learnable embedding combined with temporal positional encoding.
This masking operation can be interpreted as a latent-level control of observability, forcing prediction to rely on interactions with other entities. We elaborate more on this interpretation in Appendix~\ref{app:latent_intervention}.

\paragraph{Learning Objective.}
C-JEPA serves as the latent predictor in the object-centric world model, operating on masked object representations to learn dynamics.
Fig.~\ref{fig:training} illustrates the overall architecture and training procedure of C-JEPA.
The predictor $f$ is a ViT-style masked transformer with bidirectional attention~\cite{13,14,15}, enabling joint inference over masked object tokens across the history window and future horizon.
Here, $\bar{Z}_{\mathcal{T}}$ denotes the masked input sequence in which selected entity tokens $z$ are replaced by mask tokens $\tilde{z}$ across both the history window and the future horizon (Fig.~\ref{fig:training}).
The predictor outputs latent predictions as
\begin{equation}
\label{eq:predict}
\hat{Z}_\mathcal{T} = f\big(\bar{Z}_\mathcal{T}\big).
\end{equation}
Training minimizes a masked latent prediction objective by predicting all masked object tokens over the interval $\mathcal{T}$:
\begin{equation}
\mathcal{L}_\mathrm{mask}
=
\mathbb{E}\!\left[
\sum_{\tau \in \mathcal{T}} \sum_{i=1}^N
\mathbf{1}\!\left[\bar{z}_\tau^i \neq z_\tau^i\right]\,
\left\|\hat{z}_\tau^i - z_\tau^i\right\|_2^2
\right],
\end{equation}
where the indicator $\mathbf{1}[\bar{z}_\tau^i \neq z_\tau^i]$ selects all tokens masked in the input over $\mathcal{T}$.
This masked latent prediction objective can be viewed as a combination of
mask reconstruction over the history window and prediction over the future horizon,
\begin{equation}
\label{eq:obj_extend}
\begin{aligned}
\mathcal{L}_{\mathrm{mask}}
\;=\;&
\underbrace{
\mathbb{E}\!\left[
\left\|
\hat{z}_\tau^i - z_\tau^i
\right\|_2^2
\;\middle|\;
i \in \mathcal{M},\ \tau \leq t
\right]
}_{\mathcal{L}_\mathrm{history}}
\\[4pt]
&+
\underbrace{
\mathbb{E}\!\left[
\left\|
\hat{Z}_\tau - Z_\tau
\right\|_2^2
\;\middle|\;
\tau > t
\right]
}_{\mathcal{L}_\mathrm{future}}.
\end{aligned}
\end{equation}
The history term suppresses reliance on trivial self-dynamics under partial observability, while the future term enforces alignment with forward world modeling. Together, object-level masking makes interaction reasoning functionally necessary for minimizing the predictive objective.

\begin{remark}[Causal Interpretation of Object-Level Masking]

In C-JEPA, prediction is governed by structured dependencies among object states and exogenous variables, without assuming a fixed or explicitly parameterized causal graph.
Here, the term ``causal" refers to temporally directed predictive dependencies from past object states to future states that remain stable under object-level masking, rather than claims of causal identifiability.

Object-level masking acts as a latent intervention on predictor observability that poses counterfactual-like queries during training by selectively removing access to object states.
This introduces a causal inductive bias that enforces reliance on interaction-relevant variables.
In line with observations in \citet{3}, the predictor’s attention mechanism gives rise to state-dependent interaction patterns that can be interpreted as a soft, local relational structure capturing predictive influence.
We formalize this interpretation in Sec.~\ref{sec:theory}.
\end{remark}

\paragraph{Inference.}
At inference time, C-JEPA performs forward latent prediction following Eq.~\ref{eq:predict}, with a fully observable history and masking applied only to future tokens. This procedure aligns with the standard objective of latent world modeling, enabling direct rollout of future object states for downstream planning, control, and reasoning.

\section{Experiments}
\label{sec:exp}

In this section, we evaluate C-JEPA from two complementary perspectives, focusing on visual reasoning in Sec.~\ref{sec:clevrer_exp} and predictive control in Sec.~\ref{sec:pusht_exp}.
Across experiments, we primarily adopt VideoSAUR~\cite{10} as the object-centric encoder $g$, which aggregates frozen DINOv2~\cite{17} features into object-centric latents.
In Sec.~\ref{sec:clevrer_exp}, we additionally report results using SAVi~\cite{9} to ensure comparability with baseline models.
Implementation and pretraining details are deferred to Appendix~\ref{app:encoder}.

\subsection{Causal-JEPA for Visual Reasoning}
\label{sec:clevrer_exp}

\paragraph{Task and Evaluation.}
We first evaluate visual reasoning on CLEVRER~\cite{42}, a synthetic video benchmark for physical and causal understanding in dynamic scenes.
CLEVRER consists of videos depicting multiple interacting objects and question–answer pairs for evaluating descriptive, predictive, explanatory and counterfactual reasoning over object interactions.

To enable reasoning from predicted object trajectories, we adopt ALOE~\cite{61}, following the evaluation setup of SlotFormer~\cite{11}.
ALOE is designed to perform reasoning directly on stacked object-centric representations over time, using a language transformer to condition attention on the input question.
Briefly, we train C-JEPA and baseline world models, and roll out 128-frame input videos to 160 frames to produce imagined trajectories.
ALOE is then trained on these model-generated trajectories for downstream questions. We use a fixed number of seven object slots across all CLEVRER experiments.
Further data and evaluation details are in Appendix~\ref{app:vqa}.

\paragraph{Baseline Models.}
We evaluate three distinct object-centric world modeling architectures.
Since ALOE presupposes object-centric representations, VQA evaluation is restricted to object-centric models, and non-object-centric baselines are omitted. To ensure a fair comparison under the same reconstruction-free setting as C-JEPA, 
we additionally evaluate reconstruction-free variants of the baseline models. 

\begin{itemize}
    \item \textbf{SlotFormer}~\cite{11} performs autoregressive rollouts over object latents, serving as a canonical baseline without explicit interaction constraints. We evaluate both \emph{SlotFormer (+/–Recon.)}.

    \item \textbf{OCVP-Seq}~\cite{2} factorizes self-dynamics and object interactions at the attention level.
    We evaluate both \emph{OCVP-Seq (+/–Recon.)}.
    
    \item \textbf{OC-JEPA} is a history-unmasked variant of C-JEPA that uses the same architecture while masking only future tokens, isolating the effect of masking-induced inductive bias.
\end{itemize}

All baselines use the same pretrained SAVi encoder for fair comparison. We defer additional details to Appendix~\ref{app:baselines}.

\paragraph{Results.}

\begin{table}[t]
\ttabbox[]
  {\centering
   \tabcolsep=0.16cm
   \renewcommand{\arraystretch}{1}
   \small
\centering
\caption{VQA accuracy under varying numbers of masked objects.
Overall accuracy is reported per question, with counterfactual accuracy additionally reported per option and per question.
(V) uses VideoSAUR and (S) uses SAVi encoders.
Performance changes relative to the unmasked baseline are indicated by arrows.}
\label{tab:clevrer_num_mask}
\begin{tabular}{@{}lclll@{}}
\toprule
\multirow{2}{*}{Model} &
\multirow{2}{*}{$|\mathcal{M}|$} &
\multirow{2}{*}{\begin{tabular}[c]{@{}c@{}}Average \\ per que. (\%)\end{tabular}} &
\multicolumn{2}{c}{Counterfactual VQA (\%)} \\ \cmidrule(l){4-5}
& & & per opt. & per que. \\ \midrule

OC-JEPA (V) & 0 & 82.79 & 79.53 & 47.68 \\ \midrule

\multirow{4}{*}{C-JEPA (V)}
& 1 & 83.95 \up{1.16} & 80.34 \up{0.81} & 49.67 \up{1.99} \\
& 2 & 84.56 \up{1.77} & 80.61 \up{1.08} & 50.25 \up{2.57} \\
& 3 & 87.61 \up{4.82} & 86.49 \up{6.96} & 63.60 \up{15.92} \\
& 4 & \textbf{89.40} \best{6.61} & \textbf{88.67} \best{9.14} & \textbf{68.81} \best{21.13} \\ \midrule

OC-JEPA (S) & 0 & 77.28 & 76.69 & 41.10 \\ \midrule

\multirow{4}{*}{C-JEPA (S)}
& 1 & 78.39 \up{1.11} & 77.26 \up{0.57} & 43.13 \up{2.03} \\
& 2 & \textbf{83.88} \best{6.60}& \textbf{85.16} \best{8.47} & \textbf{60.19} \best{19.09} \\
& 3 & 79.02 \up{1.74} & 77.78 \up{1.09} & 43.77 \up{2.67} \\
& 4 & 73.28 \down{4.00} & 73.55 \down{3.14} & 34.06 \down{7.04} \\ \bottomrule
\end{tabular}
}%
{}%
\end{table}

Tab.~\ref{tab:clevrer_num_mask} shows the comparison between OC-JEPA and C-JEPA provides a clean ablation demonstrating that performance gains stem from the learning objective rather than object-centric representations alone. Introducing object-level masking consistently improves performance, indicating that the causal inductive bias induced by the objective is the key driver. Notably, improvements are substantially larger on counterfactual questions than on overall accuracy or other question categories (See full results in Appendix~\ref{app:full}), suggesting that masking does not merely enhance predictive accuracy but directly strengthens counterfactual reasoning. This aligns with the training setup, where object-level masking exposes the model to counterfactual-like queries through controlling observabilitly. Finally, excessive masking can remove informative dependencies, indicating an optimal masking regime that depends on the robustness of the underlying object representations from the encoder.
\begin{table}[t!]
\ttabbox[]
  {\centering
   \tabcolsep=0.15cm
    \renewcommand{\arraystretch}{1}
    \small
\centering
\caption{Baseline comparison using a SAVi encoder. For models with and without reconstruction, arrows indicate differences relative to the reconstruction-based variant.}
\label{tab:clevrer_main}

\begin{tabular}{@{}llll@{}}
\toprule
\multirow{2}{*}{\textbf{Model}} 
& \multirow{2}{*}{\begin{tabular}[c]{@{}c@{}}\textbf{Average per que.}\\ (\%)\end{tabular}} 
& \multicolumn{2}{c}{\textbf{Counterfactual VQA (\%)}} \\ 
\cmidrule(l){3-4} 
& 
& per opt. 
& per que. \\ 
\midrule
SlotFormer              
& 79.44                                        
& 79.28                
& 47.29                \\
\textit{(-) recon. loss   }   
& 44.94~\down{34.50}                            
& 55.62~\down{23.66}   
& 11.10~\down{36.19}   \\ 
\midrule
OCVP-Seq                    
& 83.11                                        
& 83.21                
& 56.06                \\
\textit{(-) recon. loss   }      
& 80.09~\down{3.02}                            
& 77.46~\down{5.75}    
& 43.00~\down{13.06}   \\ 
\midrule
OC-JEPA                    
& 77.28                                        
& 76.69                
& 41.10                \\ 
\midrule
C-JEPA                  
& \textbf{83.88}                                
& \textbf{85.16}       
& \textbf{60.19}       \\ 
\bottomrule
\end{tabular}

      }%
    {}%
\end{table}%

Tab.~\ref{tab:clevrer_main} shows a comparison against object-centric baselines under reconstruction and non-reconstruction settings. Results show that removing reconstruction leads to a severe performance degradation in SlotFormer~\cite{11}, indicating a strong reliance on pixel-level supervision. OCVP-Seq~\cite{2} exhibits a comparatively modest decline, suggesting that its architectural factorization partially mitigates the loss of reconstruction-based signals. In contrast, C-JEPA achieves the strongest performance without any reconstruction, highlighting the effectiveness of its decoder-free design and masking-based learning objective. Full results are available in Appendix~\ref{app:full}.

\subsection{Causal-JEPA for Efficient World Modeling}
\label{sec:pusht_exp}

\paragraph{Task and Evaluation.} To evaluate world model learning in a control-oriented setting, we use the PushT~\cite{64} environment, a robotic manipulation benchmark involving contact-rich object interactions in which an agent must reason about object dynamics and contact effects over time to achieve goal-directed behavior.
%
Following the planning pipeline introduced in \citet{16}, at time step $t$ planning is performed by solving a finite-horizon optimal control problem in the latent object-centric state space induced by the learned world model.
Given the current history window $S_T$, we optimize a sequence of future actions $a_{t:t+H-1}$
over a planning horizon $H$ as
\begin{equation}
    \label{eq:planning}
    a_{t:t+H-1}^{*} \triangleq \argmin_{a_{t:t+H-1}} \left\lVert \hat{S}_{t+H} - S_g \right\rVert_2^2,
\end{equation}
where $S_g$ denotes the latent representation of the desired goal state.
This objective encourages plans whose long-term predicted outcome matches the desired goal, without imposing intermediate costs.
The optimization problem in Eq.~\eqref{eq:planning} is solved using the Cross-Entropy Method (CEM).
Planning is performed repeatedly in a model-predictive control (MPC) fashion as new observations become available. A task is considered successful when the distance between the actual system state and the desired goal state in the original (non-latent) state space falls below a predefined threshold. We use a fixed number of four object slots across all Push-T experiments. Further details are in Appendix~\ref{app:mpc_planning}.

\paragraph{Baseline Models.}
We construct a series of world model baselines based on DINO-WM and compare them to VideoSAUR-based C-JEPA.
This allows all models, including C-JEPA, to start from the same frozen DINOv2 embeddings and differ only in the predictor architecture and its inputs, enabling a controlled comparison. We defer additional details to Appendix~\ref{app:baselines}.

\begin{itemize}
    \item \textbf{DINO-WM}~\cite{16} operates on patch-level representations and employs a autoregressive causal transformer predictor.

     \item \textbf{DINO-WM (reg.)}~\cite{19} uses a DINOv2-with-register backbone with the same patch-based predictor, isolating the effect of representation.

    \item \textbf{OC-DINO-WM} applies an object-centric encoder on top of DINOv2 patch embeddings, while keeping the DINO-WM predictor and objective unchanged, isolating the effect of representation alone.

    \item \textbf{OC-JEPA} is a history-unmasked variant of C-JEPA, masking only future tokens.

\end{itemize}

\paragraph{Results.}
\begin{table}[t!]
\ttabbox[]
  {\centering
   \tabcolsep=0.2cm
    \renewcommand{\arraystretch}{1.0}
    \small
\centering
\caption{Push-T planning success rates across world models and token budgets.
Changes are reported relative to OC-DINO-WM, and $|\mathcal{M}|=1$ is used for C-JEPA.}
\label{tab:pusht_main}
\begin{tabular}{llll@{}}
\toprule
\# Token $\times d$ & \textbf{Model} & & \textbf{Success Rate} (\%) \\ \midrule
\multirow{2}{*}{196 $\times$ 384}
 & DINO-WM      && \textbf{91.33} \\
 & DINO-WM-Reg. && 88.00 \\ \midrule
\multirow{3}{*}{6 $\times$ 128}
 & OC-DINO-WM & \scriptsize{(ref.)}& 60.67\\
 & OC-JEPA    & \scriptsize{(+JEPA)}& 76.00 \;\up{+15.33} \\
 & C-JEPA     & \scriptsize{(+Mask)} & \underline{88.67} \;\best{+28.00} \\ \bottomrule
\end{tabular}
      }%
    {}%
\end{table}


Table~\ref{tab:pusht_main} highlights a clear progression from patch-based to object-centric world modeling.
The patch-based DINO-WM achieves the highest success rate, but at the cost of operating over a large token space.
Replacing patch tokens with object-centric slots in OC-DINO-WM reduces the latent token space to \emph{1.02\%}, but results in a pronounced performance degradation, showing that object-centric representations alone do not suffice for accurate long-horizon prediction. Introducing a JEPA-style predictor in OC-JEPA partially recovers performance while preserving the efficiency of object-centric representations, indicating the advantage of joint latent prediction over autoregressive objectives.

Finally, C-JEPA closes the performance gap with DINO-WM~\cite{16} by introducing object-level masking during training.
Despite using only \emph{1.02\%} of the latent tokens, it achieves performance comparable to patch-based world models. Because predictor rollouts dominate the computational cost in model-based planning, this reduction directly translates to efficiency gains.
Under identical settings on a single L40s GPU, C-JEPA achieves over $8\times$ faster planning, requiring 673 seconds on average across three seeds to evaluate 50 trajectories, compared to 5,763 seconds for DINO-WM.
Overall, C-JEPA reconciles the efficiency of object-centric world models with the performance of image-based approaches by combining joint latent prediction with object-level masking.

\paragraph{Action and Proprioception as Auxiliary Nodes.}

\begin{figure}[t]
\centering
\includegraphics[width=\textwidth]{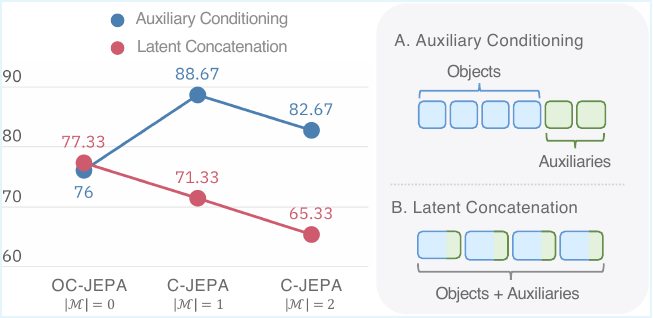}
\caption{
Comparison of auxiliary variable integration methods. \vspace{-3pt}
}
\label{fig:apnode}
\end{figure}

\vspace{-5pt}

As discussed in Sec.~\ref{sec:problem_setting}, C-JEPA treats auxiliary variables, such as actions and proprioceptive signals, as separate entities that condition the predictor.
In contrast, DINO-WM~\cite{16} concatenates these signals with patch representations, incorporating control inputs into the visual representation.
As shown in Fig.~\ref{fig:apnode}, modeling auxiliary variables as separate entities consistently outperforms concatenation into object latents, justifying their treatment as explicit variables.

\subsection{Further Analysis of the Masking Objective}

\paragraph{Qualitative analysis on PHYRE.}
To further examine whether the masking objective remains useful in more complex physical scenes, we additionally conduct qualitative case studies on PHYRE~\cite{phyre}. Following the perspective of \citet{3}, we use cross-slot attention patterns in the predictor as a proxy for local temporal interaction structure, and analyze them together with imagined rollout trajectories. As shown in Appendix~\ref{app:phyre}, C-JEPA produces more physically plausible rollouts than its unmasked counterpart OC-JEPA in representative multi-body interaction cases, and its attention patterns are more concentrated on physically relevant slots. These results provide qualitative evidence that object-level masking encourages more interaction-aware predictive dependencies in temporally extended physical environments.

\paragraph{Ablation on Masking Strategy.}
In addition to object-level masking, we evaluate token- and tube-level masking across both reasoning and control tasks. While token- or tube-level masking can occasionally match the performance of object-level masking depending on the task, object-level masking provides a more structured and controllable inductive bias. This results in more stable training behavior and superior performance on interaction-heavy queries. We provide a comprehensive comparative analysis in Appendix~\ref{app:masking_strategies}.

\vspace{-5pt}

\section{Theoretical Perspective: Causal Inductive Bias of Latent Masking}
\label{sec:theory}

This section analyzes why C-JEPA yields stronger forward prediction than future-only world modeling objectives.

\subsection{Assumption}
We first set practically motivated assumptions that clarify the scope of our analysis.
Detailed discussion of the assumptions is provided in Appendix~\ref{sec:appendix_assumptions}.

\begin{assumption}[Temporally Directed Predictive Dependencies]
\label{assump:main_temporal}
Object-level state transitions are governed by time-directed predictive dependencies. The future state of an object is determined by past object observations and auxiliary variables without requiring instantaneous causal effects within the same time step.
\end{assumption}

\begin{assumption}[Shared Transition Mechanism]
\label{assump:main_shared}
Latent state transitions are governed by a shared mechanism. Although object states vary across time and episodes, the conditional distribution of future states given a finite history remains invariant across trajectories.
\end{assumption}

\begin{assumption}[Object-Aligned Latent Representation]
\label{assump:main_rep}
Each slot corresponds to a coherent object-level state variable and these representations provide a sufficient abstraction for reasoning about object dynamics.
\end{assumption}

\begin{assumption}[Finite-History Sufficiency]
\label{assump:finite}
A finite history window of length $T_h$ suffices to predict future object states under the predictive dependencies of the system.
\end{assumption}

\paragraph{Note.}
We emphasize that we do not assume causal sufficiency. It is important to permit unobserved confounders when using object-centric representations, as they do not assume full observability of the underlying system.
We also do not assume a first-order Markov process at the object level. Future object states may depend on a history of past states rather than solely on $Z_t$, as velocity cannot be inferred from a single observation. Furthermore, we do not assume global sparsity of the underlying dynamics, which has been shown to be overly restrictive~\cite{37}.

\subsection{Interaction-Inducing Inductive Bias}
\label{sec:interaction_necessity}

This subsection analyzes the inductive bias induced by predicting missing object representations from partial observations over a temporal window $T$. 
We show that masked history prediction forces the predictor to rely on a minimal sufficient set of contextual variables beyond the target object itself, which we formalize as an \emph{influence neighborhood}.

\begin{definition}[Influence Neighborhood under Masked Completion]
\label{def:influence_neighborhood}
Let $Z_T$ denote the set of entity tokens over the history window, and let
$Z_T^{(-i)}$ denote all variables in $Z_T$ excluding the history of object $i$,
except for a minimal identity anchor used to preserve object identity. For a masked object state $z_t^i$ at time $t$, the \emph{influence neighborhood} $\mathcal{N}_t(i)$ is a minimal sufficient subset
$\mathcal{N}_t(i) \subseteq Z_T^{(-i)}$ such that
\begin{equation}
p\!\left(z_t^i \mid Z_T^{(-i)}\right)
=
p\!\left(z_t^i \mid \mathcal{N}_t(i)\right).
\end{equation}
Minimality is defined in the sense that no strict subset of $\mathcal{N}_t(i)$
satisfies this condition.
\end{definition}

The influence neighborhood captures the smallest set of contextual variables that must be consulted in order to
recover the state of a masked object under partial observability. 
The concept of influence neighborhoods
is closely related to notions of Markov blankets and sufficient statistics~\cite{25,26}, and we discuss these connections in Appendix~\ref{app:influence_vs_causal}. 
We define the influence neighborhood for a single masked object state, and masking multiple objects during training corresponds to jointly learning several such conditional dependencies in parallel, without altering the definition.
The following result formalizes the basic predictive consequence of object-level masking.
It should be interpreted as a statement about predictive sufficiency under masking, rather than as a claim of recovering true causal interactions.

\begin{theorem}[Interaction Necessity under Masked History Completion]
\label{thm:interaction_necessity_mse}
Consider the masked history prediction loss from Eq.~\ref{eq:obj_extend} for object $i$ at time $t$:
\begin{equation}
\label{eq:Li_history_mse}
\mathcal{L}_{\mathrm{history}}^{i}
:= \mathbb{E}\!\left[
\left\| \hat{z}_t^i - z_t^i \right\|_2^2
\right],
\end{equation}
where $\hat{z}_t^i$ is computed from the observable history $Z_T^{(-i)}$ following Eq.~\ref{eq:predict}, and the expectation is taken with respect to the conditional distribution of $z_t^i$ given $Z_T^{(-i)}$.
Under Assumptions~\ref{assump:main_temporal}--\ref{assump:finite}
and Definition~\ref{def:influence_neighborhood},
the optimal predictor that minimizes~\eqref{eq:Li_history_mse} satisfies
\begin{equation}
\label{eq:optimal_predictor}
\hat{z}_t^{i\,*}=\mathbb{E}\!\left[
z_t^i\;\middle|\;Z_T^{(-i)}
\right]
=
\mathbb{E}\!\left[
z_t^i
\;\middle|\;
\mathcal{N}_t(i)
\right].
\end{equation}
Consequently, any predictor that fails to utilize information in $\mathcal{N}_t(i)$
cannot attain the minimum achievable expected reconstruction error
under masked completion.
\end{theorem}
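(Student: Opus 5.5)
The argument is the standard $L_2$ projection argument, combined with the defining property of the influence neighborhood in Definition~\ref{def:influence_neighborhood}. Write $C := Z_T^{(-i)}$ for the observable context, which includes the identity anchor. Let $\hat z = h(C)$ be the output of any measurable predictor, and assume $\mathbb{E}\|z_t^i\|_2^2<\infty$.

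\textbf{Step 1: the first equality.} Set $m(C) := \mathbb{E}[z_t^i \mid C]$ and decompose
\begin{equation*}
\mathbb{E}\|h(C)-z_t^i\|_2^2 = \mathbb{E}\|h(C)-m(C)\|_2^2 + \mathbb{E}\|m(C)-z_t^i\|_2^2 .
\end{equation*}
The cross term $2\,\mathbb{E}\big[(h(C)-m(C))^\top (m(C)-z_t^i)\big]$ vanishes. To see this, condition on $C$ and use the tower property: given $C$, the factor $h(C)-m(C)$ is fixed, and $\mathbb{E}[m(C)-z_t^i \mid C]=0$. The second term does not depend on $h$, so the loss in Eq.~\ref{eq:Li_history_mse} is minimized exactly when $h(C)=m(C)$ almost surely. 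This gives $\hat z_t^{i\,*}=\mathbb{E}[z_t^i\mid Z_T^{(-i)}]$.

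\textbf{Step 2: role of the assumptions.} Assumptions~\ref{assump:main_temporal}--\ref{assump:finite} are used only to make this conditional expectation a single well-defined target.
\begin{itemize}
\item Shared mechanism (Assumption~\ref{assump:main_shared}): the conditional law of $z_t^i$ given $C$ is the same across trajectories, so one function $m$ serves all training samples and the expectation is over a fixed joint law.
\item Finite-history sufficiency (Assumption~\ref{assump:finite}): conditioning on the window $T$ loses nothing relative to the longer past.
\item Object alignment (Assumption~\ref{assump:main_rep}) and temporal directedness (Assumption~\ref{assump:main_temporal}): these justify treating $z_t^i$ as a well-defined object variable predictable from $C$.
\end{itemize}

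\textbf{Step 3: the second equality.} By Definition~\ref{def:influence_neighborhood}, $p(z_t^i\mid C)=p(z_t^i\mid \mathcal N_t(i))$. Integrating $z_t^i$ against these equal conditional densities gives $\mathbb{E}[z_t^i\mid C]=\mathbb{E}[z_t^i\mid\mathcal N_t(i)]$. Because $\mathcal N_t(i)\subseteq C$, the right-hand side is a function of $C$, so the identity holds almost surely.

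\textbf{Step 4: the ``consequently'' clause.} This is the main obstacle, because ``fails to utilize information in $\mathcal N_t(i)$'' must be made precise. I would formalize it as follows: $h$ is measurable with respect to $\sigma(C')$ for some $C' = C\setminus A$ with $A\cap \mathcal N_t(i)\neq\emptyset$. By Step 1 applied to $C'$, the best such predictor attains excess risk $\mathbb{E}\|\mathbb{E}[z_t^i\mid C']-m(C)\|_2^2$ over the optimum. It remains to show this is strictly positive.

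Suppose instead that it were zero. Then $\mathbb{E}[z_t^i\mid C']=\mathbb{E}[z_t^i\mid \mathcal N_t(i)]$ almost surely, so the conditional mean would be recoverable from $\mathcal N_t(i)\cap C'$, a strict subset of $\mathcal N_t(i)$. This would contradict minimality. However, the definition's minimality concerns full conditional distributions, not means, so the contradiction does not follow directly.

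I would close this gap by one of two routes:
\begin{itemize}
\item \textbf{Mean version of minimality.} Interpret minimality for the conditional mean, as is natural under squared loss, and state this explicitly. The contradiction then follows immediately.
\item \textbf{Distributional version.} Keep minimality for full distributions and add a mild non-degeneracy condition: the conditional law of $z_t^i$ given $C$ is determined by its mean, as in a location family. The contradiction then goes through.
\end{itemize}
Under either reading, the excess risk is strictly positive, so any such predictor cannot attain the minimum.
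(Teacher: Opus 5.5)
Your Steps 1 and 3 follow the paper's proof essentially verbatim. Both use the conditional risk decomposition to identify $h^*=\mathbb{E}[z_t^i\mid Z_T^{(-i)}]$, and both note that equal conditional laws give equal conditional means. For the ``consequently'' clause, the paper's proof only asserts that a predictor ignoring $\mathcal{N}_t(i)$ differs from $h^*$ on a set of positive probability. Your objection to that step is correct, and read literally the clause is false. Take $C=(X_1,X_2)$ with $X_1$ and $X_2>0$ independent and nondegenerate, and $z_t^i=X_1+X_2\varepsilon$ with $\varepsilon$ standard Gaussian and independent of $C$. The conditional law has mean $X_1$ and variance $X_2^2$, so $\mathcal{N}_t(i)=\{X_1,X_2\}$ under the distributional definition. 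Yet $h(C)=X_1$ ignores $X_2$ and is Bayes-optimal. So an extra hypothesis, such as your mean-level minimality, is genuinely needed.

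Your repair still has a gap, in the sentence ``so the conditional mean would be recoverable from $\mathcal{N}_t(i)\cap C'$''. Minimality only excludes strict subsets of $\mathcal{N}_t(i)$. But $C'=C\setminus A$ still contains variables of $C\setminus\mathcal{N}_t(i)$ that can substitute for the dropped ones. Let $C=(X_1,X_2,X_3)$ with $X_1,X_2$ independent and nondegenerate, $X_3=X_2$ a.s., and $z_t^i=X_1+X_2+\varepsilon$. This is a location family, so both your readings of minimality agree, and $\{X_1,X_2\}$ is a minimal sufficient set. Yet for $A=\{X_2\}$, the $\sigma(C')$-measurable predictor $X_1+X_3$ is Bayes-optimal, so your formalized claim fails. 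Near-duplicates are not exotic in this setting: in Push-T, the proprioceptive signal and the agent slot both encode the agent's position.

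The missing ingredient is an intersection property, which needs a positivity assumption. For example, require that the law of $C$ have a strictly positive density with respect to a product measure on a product support. Write $V=\mathcal{N}_t(i)\cap C'$, $U=\mathcal{N}_t(i)\cap A\neq\emptyset$, and $W=C'\setminus\mathcal{N}_t(i)$. Under positivity, Fubini shows that an a.s.\ identity $f(V,W)=g(V,U)$ forces $g(V,U)=k(V)$ a.s. The tower property then gives $\mathbb{E}[z_t^i\mid V]=m(C)$. This contradicts mean-minimality, and under your location-family condition it also contradicts distributional minimality. Positivity also makes $\mathcal{N}_t(i)$ unique, which the theorem's phrase ``the influence neighborhood'' tacitly presumes. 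Alternatively, restrict the claim to predictors that depend on $C$ only through a strict subset of $\mathcal{N}_t(i)$; for those, mean-minimality gives strictly positive excess risk immediately.
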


As masked $z_t^i$ is not directly observable, the optimal predictor must rely on other variables
to reduce uncertainty.
By Definition~\ref{def:influence_neighborhood}, $\mathcal{N}_t(i)$ is the minimal sufficient subset that preserves
the conditional distribution of $z_t^i$.
Any predictor that ignores variables in $\mathcal{N}_t(i)$ therefore incurs strictly higher expected loss.
A complete proof is provided in Appendix~\ref{app:thm_proof}.

\begin{corollary}[Learning Intervention-Stable Influence Neighborhoods]
\label{cor:stable_neighborhood}
Optimizing $\mathcal{L}_{\mathrm{mask}}$ under repeated exposure to diverse object-level masking encourages state-dependent attention patterns that align with the influence neighborhood $\mathcal{N}_t(i)$.
This can be interpreted as a soft, local relational structure capturing predictive influence.
\end{corollary}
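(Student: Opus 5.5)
The plan is to turn the corollary's soft wording into a necessary condition on the trained predictor and derive it from Theorem~\ref{thm:interaction_necessity_mse}, applied once per masking pattern. First I would write $\mathcal{L}_{\mathrm{mask}}$ as an expectation over the random mask $\mathcal{M}$ drawn by the training distribution. The history part then becomes a mixture
\[
\mathcal{L}_{\mathrm{history}} = \mathbb{E}_{\mathcal{M}}\Big[\textstyle\sum_{i\in\mathcal{M}}\mathcal{L}^{i}_{\mathrm{history}}(\mathcal{M})\Big],
\]
where the visible context for object $i$ is the unmasked set $Z_T^{(-\mathcal{M})}$ together with the identity anchors. Because one predictor $f$ is shared across all masks, and Assumption~\ref{assump:main_shared} makes the target conditional invariant across trajectories, a global minimizer of the mixture must minimize every term that has positive mask probability. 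Theorem~\ref{thm:interaction_necessity_mse} then forces the output at each masked token to equal $\mathbb{E}[z_t^i\mid \mathcal{N}_t(i;\mathcal{M})]$, where $\mathcal{N}_t(i;\mathcal{M})$ is the influence neighborhood of Definition~\ref{def:influence_neighborhood} computed inside the visible context.

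The second step links this functional requirement to attention. In a bidirectional transformer, the representation of a masked token at the last layer depends on the input tokens only through the attention paths leading into it. I would define the \emph{effective attention} from token $z_t^i$ to a context token $z_\tau^j$ as the attention-rollout product summed over layers. I would then show the contrapositive: if the effective attention on some $z_\tau^j\in\mathcal{N}_t(i;\mathcal{M})$ is zero, the output is measurable with respect to $\mathcal{N}_t(i;\mathcal{M})\setminus\{z_\tau^j\}$. By minimality of $\mathcal{N}_t(i)$, the conditional expectation differs from $\mathbb{E}[z_t^i\mid \mathcal{N}_t(i)]$ on a set of positive probability, so by the final clause of Theorem~\ref{thm:interaction_necessity_mse} the loss is strictly suboptimal. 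Hence, at an optimum, effective attention is nonzero on every element of the neighborhood.

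Next come the two qualifiers in the statement. For \emph{state dependence}: $\mathcal{N}_t(i)$ is defined pointwise through the conditional distribution given the realized context, and the softmax weights are functions of the token values, so the required support can vary with the scene configuration. For \emph{diversity}: because many different masks are sampled, each object $j$ appears sometimes as context and sometimes as masked. The shared weights therefore have to reproduce the correct neighborhood structure for all subsets simultaneously. Tokens outside every neighborhood get no positive gradient signal from any term, while tokens inside some neighborhood are consistently required. The result is a soft, local relational structure, which I would state as an alignment of effective-attention support with $\bigcup_{\mathcal{M}}\mathcal{N}_t(i;\mathcal{M})$.

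The main obstacle is the second step. Nonzero attention is necessary for using a token, but neither it nor the size of the weight is sufficient evidence of use, since value vectors can cancel and residual paths can bypass attention. So the argument yields only necessity of attention on $\mathcal{N}_t(i)$, not exclusivity. I would handle this by stating the corollary as the necessity result above. The "alignment" part, meaning small weight off the neighborhood, I would support only heuristically: irrelevant tokens add variance and give no reduction in loss, so under finite capacity and regularization they tend to receive little weight. This matches the word "encourages" in the statement.
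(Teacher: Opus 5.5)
\textbf{There is a genuine gap: your second step does not go through, and the result it aims at would say nothing about the trained model anyway.}

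The contrapositive fails. Suppose the effective attention from the masked query to $z_\tau^j$ were zero. Then the output would be a function of the \emph{entire} visible context minus $z_\tau^j$, not of $\mathcal{N}_t(i;\mathcal{M})\setminus\{z_\tau^j\}$, and that context still contains tokens outside $\mathcal{N}_t(i)$. Minimality in Definition~\ref{def:influence_neighborhood} only says that no strict subset of $\mathcal{N}_t(i)$ is sufficient. It does not make $\mathcal{N}_t(i)$ the unique minimal sufficient set. Two counterexamples show per-element necessity is false:
\begin{itemize}
\item If $z_\tau^j$ is a deterministic function of visible tokens outside $\mathcal{N}_t(i)$ (common under deterministic dynamics, e.g.\ via neighboring frames), those tokens form another sufficient set. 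A predictor that reads them is Bayes-optimal while ignoring $z_\tau^j$.
\item Minimality is defined through the full conditional law, but the MSE optimum only sees the conditional mean. Take $z_t^i = a + c\,\epsilon$ with $a,c$ visible and $\epsilon$ independent zero-mean noise. Then $c\in\mathcal{N}_t(i)$, yet $\mathbb{E}[z_t^i\mid a,c]=\mathbb{E}[z_t^i\mid a]$, so $c$ can be dropped at zero cost.
\end{itemize}
Per-element necessity therefore needs extra hypotheses: uniqueness of the minimal sufficient set (e.g.\ via a positivity/intersection condition) and sufficiency stated for $\mathbb{E}[z_t^i\mid\cdot]$. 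The last clause of Theorem~\ref{thm:interaction_necessity_mse} cannot be invoked element by element to supply them.

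Even if you grant necessity, it has no bite for C-JEPA's predictor. Softmax weights are strictly positive, so attention rollout is strictly positive for every token pair at every parameter setting, optimal or not. Since tiny weights can be offset by large value vectors, no quantitative lower bound follows either. Your formal result therefore constrains nothing, and the whole content of the corollary ends up in your heuristic last step.

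The paper argues in the opposite direction, which needs only sufficiency. Since $p(z_t^i\mid Z_T^{(-i)})=p(z_t^i\mid\mathcal{N}_t(i))$, tokens outside $\mathcal{N}_t(i)$ are conditionally uninformative given it, so attending to them cannot lower the Bayes risk. Hence, in an expressive attention class, there exist optimal solutions that concentrate attention on $\mathcal{N}_t(i)$ with negligible weight elsewhere. Diverse masking is then read, by analogy with invariant causal prediction and invariant risk minimization, as a family of latent interventions that favors dependencies stable across masks. That existence statement is the direction \emph{align} in the corollary refers to, and it survives non-uniqueness of minimal sufficient sets, since the construction just picks one. Recasting your second step this way would repair the argument at the level of rigor the statement supports.

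Your first step also needs an explicit realizability assumption. A minimizer of the mixture over masks minimizes each per-mask term only if a single $f$ in the class attains all per-mask Bayes risks at once. This is plausible, since the mask tokens reveal $\mathcal{M}$. Assumption~\ref{assump:main_shared} does not provide it, and sharing $f$ across masks is the obstacle here rather than the reason.
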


Corollary~\ref{cor:stable_neighborhood} is closely related to prior work on invariant causal prediction~\cite{27} and invariant risk minimization~\cite{28},
which study predictive dependencies that remain stable across masking or environments.
In this sense, object-level masking in C-JEPA can be viewed as inducing a collection of latent interventions on observability of the predictor,
under which intervention-stable influence neighborhoods emerge. Further discussion is provided in Appendix~\ref{app:cor_proof}.



\begin{remark}[Influence Neighborhoods as Predictive Sets]
Influence neighborhoods provide a practical alternative to full causal discovery in complex dynamical systems.
In realistic settings with latent confounders and high-dimensional state spaces, identifying true causal parents may be infeasible or ill-defined~\cite{24, 48}.
Importantly, we interpret influence neighborhoods as predictively sufficient sets under masking, rather than as estimates of true causal parents or causal mechanisms, since they may include variables that are informative under latent confounding or partial observability.
\end{remark}

In this sense, intervention-stable influence neighborhoods capture variables that are useful for prediction under controlled perturbations, making them relevant for reasoning, planning, and model-based control.
C-JEPA encourages predictors to rely on such neighborhoods through its learning objective, without requiring an explicit causal graph.

\begin{remark}[Transfer of Bidirectional Training to Forward Prediction]
\label{rem:bidirectional_transfer}
Due to bidirectional masked prediction during training, the influence neighborhood $\mathcal{N}_t(i)$ abstracts away temporal direction.
Although the true system dynamics evolve via a forward transition
$s_{t+1} = \Psi(s_t)$,
$\mathcal{N}_t(i)$ can be interpreted as a direction-agnostic interaction structure
that abstracts away edge direction and captures variables jointly informative about an object’s state,
independent of whether this information arises from past or future observations.

As a result, minimizing $\mathcal{L}_{\mathrm{mask}}$ induces latent representations
that encode interaction constraints invariant to the direction of information flow.
Under Assumption~\ref{assump:main_shared}, predicting an object’s state during training depends on the same variables that govern its forward dynamics from past observations, irrespective of temporal direction.
\end{remark}
\section{Conclusion}

In this paper, we presented C-JEPA, which combines joint embedding prediction with object-level masking to introduce the causal inductive bias directly through the learning objective. To our knowledge, this is the first work to integrate JEPA with object-centric world modeling. By treating object masking as counterfactual-like query, C-JEPA learns interaction-aware dynamics without relying on reconstruction losses or task-specific supervision.
Empirically, C-JEPA yields strong gains in visual reasoning, with especially large improvements on counterfactual questions. In predictive control, it enables highly efficient planning with orders of magnitude fewer tokens than patch-based world models, while achieving comparable performance.

We also identify several limitations. Performance depends on the quality of the object-centric encoder, which can limit the performance ceiling. Moreover, while we formally characterize influence neighborhoods, we do not directly validate them on datasets with explicit temporal causal graphs, leaving this to future work. Another promising direction is jointly refining object-centric encoders using strong pretrained backbones without representational collapse~\cite{4}, as well as evaluating C-JEPA in more complex environments with richer interactions.
Overall, this work establishes object-level masking as a principled and effective mechanism for introducing causal inductive bias into efficient, interaction-aware world models.

\section*{Impact Statement}

This paper presents work whose goal is to improve the efficiency and interaction-aware learning of task-agnostic world models and their applications.
While advances in world modeling may have broad downstream impacts in areas such as robotics, simulation, and decision-making systems, the methods proposed here are primarily foundational and methodological.
We do not foresee any immediate negative societal consequences specific to this work beyond those generally associated with the deployment of learned models, and therefore do not highlight any particular impacts here.






\bibliography{main}
\bibliographystyle{icml2026}

\newpage
\appendix
\onecolumn

\setcounter{figure}{0}
\setcounter{table}{0}

\renewcommand{\thefigure}{A\arabic{figure}}
\renewcommand{\thetable}{A\arabic{table}}

\section{Notations}\label{app:notation}
\begin{table}[h!]
\ttabbox[]
  {\centering
   \tabcolsep=1.1cm
    \renewcommand{\arraystretch}{1}
    \small
\centering
\caption{Notations used throughout the paper.}
\renewcommand{\arraystretch}{1.3}
\begin{tabular}{cl}
\toprule
\textbf{Notation} & \textbf{Description} \\
\midrule
$X_t \in \mathbb{R}^{H \times W \times C}$ & Pixel-level image frame observed at time $t$ \\

$g$ & Object-centric encoder mapping pixels to slot representations \\
$f$ & Latent predictor mapping past states to future states \\
$\phi$ & Linear projector used to construct masked object tokens \\

$S_t = \{s_t^1, \dots, s_t^N\}$, $s_t^i \in \mathbb{R}^d$& Set of object-centric slot representations at time $t$ \\
$N$ & Fixed number of object slots \\
$d$ & Dimensionality of each slot representation \\

$T_h$, $T_p$& History window length and prediction horizon length respectively\\
$T = \{t-T_h+1, ..., t\}$& History index set\\
$\mathcal{T}=\{t-T_h+1, ..., t+T_p\}$ & Full history--future prediction interval \\

$U_t$ & Set of exogenous variables at time $t$ \\
$a_t$ & Action taken at time $t$ \\
$p_t$ & Proprioceptive signal at time $t$ \\

$\mathcal{M} \subset \{1,\dots,N\}$ & Index set of masked objects \\
$e_\tau$ & Learnable embedding combined with temporal positional encoding \\
$S_t^{\mathrm{m}} = \{s_t^i \mid i \in \mathcal{M}\}$ & Set of masked object states at time $t$ \\
$S_t^{\mathrm{c}} = S_t \setminus S_t^{\mathrm{m}}$ & Set of unmasked object states \\
$Z_t = \{S_t, U_t\}$ & Set of entity tokens at time $t$  \\
$\tilde{z}_t^i$ & Masked token for object $i$ at time $t$ \\
$\bar{z}_t^i$ & Slot token after applying object-level masking \\
$\bar{Z}_t$ & Set of slot tokens after masking at time $t$ \\

$\mathcal{Z}_t^{(-i)}$ & Available context variables excluding object $i$ at time $t$ \\
$\mathcal{Z}_{t,\mathrm{mask}}^{(-i)}$ & Masked context variables under object-level masking \\
$\mathcal{N}_t(i)$ & Influence neighborhood of object $i$ (single-step form) \\

$\hat{z}_{t+1}^i$ & Predicted future state of object $i$ at time $t+1$ \\
$\hat{z}_{t+1:t+T_p}$ & Predicted future object states over the horizon \\
$\mathbf{1}[\ \cdot \ ]$ & Indicator function (equals 1 if condition holds, 0 otherwise)\\\hline
\end{tabular}
\label{tab:notation}
      }%
    {}%
\end{table}%

\section{Discussion}

\subsection{Interpretation of Object-Level Masking as a Latent Intervention on Predictor Observability}
\label{app:latent_intervention}

We clarify how object-level masking in C-JEPA can be interpreted as a latent intervention on observability.
Crucially, masking alters the information available to the predictor without modifying the underlying data-generating or transition mechanism.

\begin{lemma}[Latent Intervention via Object-Level Masking]
\label{lem:latent_intervention}
Object-level masking removes direct access to the current latent state of a masked object $z_t^i$ while preserving access to other object states and auxiliary variables.
As a result, prediction of the future object state is conditioned on a restricted set of observable variables,
\[
\bar{Z}_t \;=\; \big(Z_t \setminus \{z_t^i\}\big) \cup \{\tilde{z}_t^i\},
\]
where $\tilde{z}_t^i$ is a masked token that does not reveal the current value of $z_t^i$.
This intervention restricts observability at time $t$ while leaving the transition dynamics unchanged.
\end{lemma}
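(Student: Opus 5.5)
The plan is to prove Lemma~\ref{lem:latent_intervention} directly from the masking construction in Eq.~\ref{eq:masking}. The argument separates two facts. First, masking changes only the conditioning set of the predictor $f$. Second, the transition mechanism is a property of the data distribution and is therefore untouched. The proof then comes down to checking three claims: (i) $\tilde z_t^i$ carries no information about the current value $z_t^i$ beyond what the anchor $z_{t_0}^i$ already gives; (ii) all other tokens enter $\bar Z_t$ unchanged; (iii) the joint law of $(Z_\tau)_{\tau\in\mathcal{T}}$, and hence every conditional transition $p(Z_{\tau+1}\mid Z_{\tau-T_h+1:\tau})$, does not depend on $\mathcal{M}$.

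For (i), note from Eq.~\ref{eq:masking} that $\tilde z_t^i=\phi(z_{t_0}^i)+e_t$. Here $\phi$ and $e_t$ are learned parameters, fixed with respect to the data, so $\tilde z_t^i$ is a deterministic function of $z_{t_0}^i$ alone. This gives the conditional independence $z_t^i \perp \tilde z_t^i \mid z_{t_0}^i$, and likewise $\sigma(\bar Z_t)\subseteq\sigma\big(Z_t\setminus\{z_t^i\},\,z_{t_0}^i\big)$. So the predictor's information set is the restricted set $\big(Z_t\setminus\{z_t^i\}\big)\cup\{z_{t_0}^i\}$. The phrase ``does not reveal the current value'' then has a precise meaning: any predictor $f(\bar Z_t)$ is measurable with respect to a $\sigma$-algebra that excludes $z_t^i$ except through the anchor.

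Claim (ii) is immediate: for $j\notin\mathcal M$ and for the auxiliary tokens $U_t$, we have $\bar z_t^j=z_t^j$ by definition of the masked input sequence $\bar Z_{\mathcal T}$.

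For (iii), I will use Assumption~\ref{assump:main_shared} together with the fact that masking is applied after the encoder $g$ produces $S_t$. Masking is a map applied to the predictor's input; the random variables $X_t$, $S_t=g(X_t)$ and $U_t$ are generated independently of the sampled mask index set $\mathcal M$, which is drawn at training time. Hence $p(Z_{\mathcal T}\mid\mathcal M)=p(Z_{\mathcal T})$, and every transition conditional is preserved. This is what distinguishes the operation from a $do$-intervention on $z_t^i$. The regression target $z_t^i$ (or $Z_{t+1}$) is still drawn from the unaltered dynamics, so only observability changes.

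The main obstacle is (i) in the case where the anchor is itself informative about $z_t^i$, for example by persistence of object appearance. Strictly speaking, $\tilde z_t^i$ then does reveal partial information about $z_t^i$. I would handle this by stating (i) as a conditional statement: the mask reveals nothing beyond $z_{t_0}^i$, which is exactly the ``minimal identity anchor'' excluded in Definition~\ref{def:influence_neighborhood}. I would also note that for $\tau=t_0$ the token is unmasked by construction, so the lemma concerns only $\tau>t_0$.
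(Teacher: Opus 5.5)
Your proposal is correct and takes the same route as the paper. The paper supports the lemma only with an informal discussion: masking replaces the target token while leaving all other inputs unchanged, and it acts on the predictor's observability rather than on the data-generating process. Your claims (i)--(iii) make exactly this precise, and for (iii) the independence of the sampled $\mathcal{M}$ from $Z_{\mathcal{T}}$ already suffices, so Assumption~\ref{assump:main_shared} is not needed.

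Your conditional reading of ``does not reveal the current value'' (nothing beyond $z_{t_0}^i$, and only for $\tau>t_0$) is a genuine sharpening that the paper glosses over. For a nearly static object, $\phi(z_{t_0}^i)+e_t$ does effectively reveal $z_t^i$. One wording fix: in Definition~\ref{def:influence_neighborhood} the anchor is retained in $Z_T^{(-i)}$, not excluded.
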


\paragraph{Discussion.}
At a given time step, object-level masking replaces the current latent state of a target object
with a masked token, thereby restricting the predictor’s access to that object while leaving
all other inputs unchanged.

From this perspective, object-level masking acts as a \emph{latent intervention on observability}.
Rather than intervening on the data-generating process itself, masking induces counterfactual-like prediction problems by forcing the model to reason about future states under controlled information removal.
This interpretation aligns with viewing masking as an intervention in latent space, enabling the learning of interaction-dependent predictive structure without requiring multiple environments or explicit causal graph manipulation.

\subsection{Relation to Classical Causal Concepts}
\label{app:influence_vs_causal}

The notion of an \emph{influence neighborhood} is related to several classical concepts in causal inference and probabilistic modeling, but is deliberately defined in a weaker and more operational sense, tailored to masked prediction in object-centric world models.

\paragraph{Relation to Markov blankets and causal parents.}
In probabilistic graphical models~\cite{26}, the Markov blanket of a variable is the minimal set of variables that renders it conditionally independent of all others.
Similarly, in structural causal models~\cite{25}, the causal parents of a variable represent its direct causes.
Both notions aim to characterize \emph{minimal sufficient sets} under strong assumptions, including access to a correct causal graph, causal sufficiency, and full observability.

In contrast, influence neighborhoods are defined through the masked completion task in Definition~\ref{def:influence_neighborhood}.
They characterize the minimal subset of \emph{observable variables} within $Z_T^{(-i)}$ that are sufficient to predict a masked object state $z_t^i$ when its own history is unobserved, except for a minimal identity anchor.
No assumptions are made about the identifiability of a true causal graph.

As a consequence, an influence neighborhood need not coincide with the set of causal parents.
It may include variables that are causally downstream, correlated through latent confounders, or otherwise informative under partial observability.
Influence neighborhoods should therefore be interpreted as \emph{predictively sufficient} sets under masking, rather than as estimates of true causal mechanisms. Nevertheless, we argue that influence neighborhoods still introduce a causal inductive bias,
in the sense that forward prediction in the world model is constrained to be temporally directed,
thereby favoring one-way predictive dependencies from past to future rather than arbitrary
symmetric associations.

\paragraph{Relation to invariant causal prediction and invariant risk minimization.}
Invariant causal prediction (ICP)~\cite{27} and invariant risk minimization (IRM)~\cite{28} aim to identify predictive relationships that remain stable across environments or interventions.
This emphasis on stability aligns conceptually with our focus on predictive dependencies that persist under object-level masking.

However, existing methods typically assume access to multiple environments from data-generating process or externally specified interventions.
In contrast, C-JEPA induces systematic variations in observability through object-level masking within a single dataset.
Influence neighborhoods are thus defined relative to \emph{latent observability interventions}, rather than invariance across externally labeled environments or do-interventions on the data-generating process.

\paragraph{Summary.}
Overall, influence neighborhoods generalize classical notions of minimal predictive sets to object-centric, temporally extended, and partially observed settings.
They capture which variables are necessary for prediction under masked completion, without assuming causal sufficiency or aiming to recover a fixed causal graph.
This perspective enables principled analysis of interaction structure in realistic world models while remaining agnostic to causal identifiability.

\section{Dataset}
 
\subsection{CLEVRER~\cite{42}}
CLEVRER consists of videos with 128 frames at a spatial resolution of $480 \times 320$.
The dataset is split into 10{,}000 training videos, 5{,}000 validation videos, and 5{,}000 test videos.

The CLEVRER VQA benchmark includes four categories of questions: descriptive, counterfactual, explanatory, and predictive.
Descriptive questions are evaluated per question, whereas the remaining categories are evaluated both per question and per answer option.

Due to current unavailability of the evaluation server, we report results on the validation set using the same evaluation protocol as the test set.
All models are trained exclusively on the training split, and the validation set is treated as a held-out test set during evaluation.
No validation data is used for training or model selection.
The subset of test-set results available to us exhibits trends consistent with those reported on the validation set.

The number of objects present in a scene is not fixed and may change over time, as objects can enter or leave the scene during a video.
Across the dataset, the maximum number of simultaneously visible objects is six.
Accordingly, we use a fixed set of seven object slots throughout, where one slot implicitly captures background or empty regions.

\subsection{Push-T~\cite{64}}

The Push-T dataset consists of videos with variable temporal length, where each frame is rendered at a resolution of $224 \times 224$.
The training set contains 18{,}410 trajectories, and the validation set has 21 trajectories.

The task involves a simple planar manipulation scenario in which a controllable agent, represented as a blue circle, must push a green T-shaped object into a target configuration, a gray T-shaped object.
Success is defined by placing the green object within a predefined distance threshold of the goal pose.

Proprioceptive inputs have dimensionality four, and actions have dimensionality two.
Each scene consistently contains three objects, and we therefore use four object-centric slots, including one slot reserved for background.

\section{Encoders}
\label{app:encoder}

\begin{figure}[h]
\centering
\includegraphics[width=\textwidth]{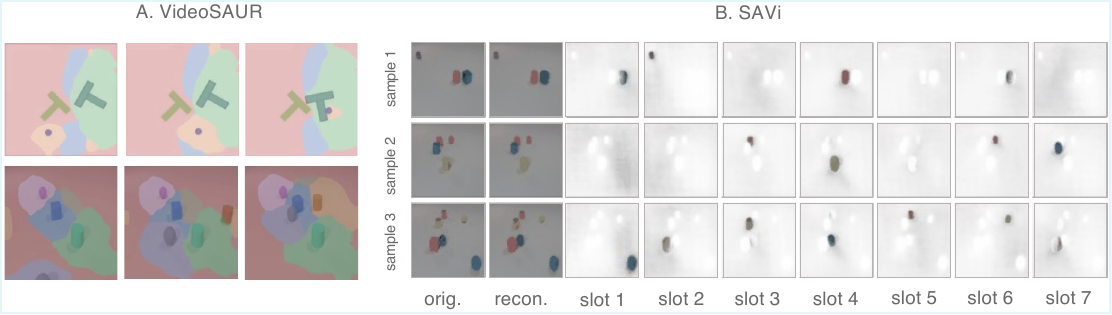}
\caption{
Slot visualization from object-centric encoders.
}
\label{fig:vis}
\end{figure}

\subsection{VideoSAUR~\cite{10}}

VideoSAUR~\cite{10} is a video slot attention model built on a frozen DINOv2~\cite{17} backbone that aggregates patch-level features into object-centric slots using a SAVi-style grouping mechanism~\cite{9}.
It additionally employs a temporal similarity loss to encourage consistency across time, and we adopt it as our primary object-centric encoder.

\paragraph{Common Setup.}
Across all experiments, VideoSAUR is trained following the standard protocol.
Visual features are extracted using a frozen DINOv2 ViT-S/14 backbone, producing 196 patch tokens per frame with dimensionality 384.
Patch features are projected to a 128-dimensional latent space and grouped into object-centric slots using Slot Attention with two iterations.
The DINOv2 backbone remains frozen throughout training.
Models are trained for 100k steps using Adam with a base learning rate of $1\times10^{-4}$, scaled linearly by batch size, and an exponential decay schedule with 2k warmup steps.
Gradients are clipped at 0.05.
The training objective combines a feature reconstruction loss on DINOv2 features and a temporal similarity loss.
Unless otherwise specified, remaining training details follow the VideoSAUR setup used for YouTube-VIS by \citet{11}.

\paragraph{Training Details on CLEVRER.}
For CLEVRER, frames are temporally subsampled with stride two.
Training uses short clips of six frames, while validation uses clips of length ten.
We use $N=7$ slots, a batch size of 32, and set the temporal similarity loss weight to 0.3.
Data normalization on ImageNet statistics and random horizontal flips are applied.

\paragraph{Training Details on Push-T.}
For Push-T, we use clips of six frames with stride two for both training and validation, $N=4$ slots, and a batch size of 64.
The temporal similarity loss weight is increased to 0.25.
Training is performed on a Push-T dataset variant, where each object is moving in a random manner to increase visual and dynamical diversity.

\subsection{SAVi~\cite{9}}
SAVi~\cite{9} is a video slot attention model that learns object-centric representations by grouping frame-level visual features into a fixed set of slots using iterative attention.
It reconstructs input frames from slot representations and serves as a standard object-centric encoder in prior world model and video reasoning work.

\paragraph{Training Details on CLEVRER.}
For experiments requiring a SAVi-based encoder, we use the stochastic SAVi model following SlotFormer~\cite{11}, adopting the same architecture and training configuration.
Training is performed on six-frame video clips resized to $64\times64$.
The encoder maps RGB frames to 128-dimensional features using a convolutional backbone, followed by Slot Attention with $N=7$ slots and two iterations.
Each slot has dimensionality 128.
We use the stochastic SAVi formulation with a Gaussian prior of variance $0.01$.
Models are trained for 8 epochs using Adam with a learning rate of $1\times10^{-4}$ and a cosine schedule with linear warmup over the first $2.5\%$ of steps.
Gradients are clipped at 0.05.
The batch size is 32 for training and 64 for validation.
The objective consists of an image reconstruction loss and a KL regularization term weighted by $1\times10^{-4}$.

\subsection{Auxiliary Variable Encoders.}\label{app:encoder_swm}
If available, actions and proprioceptive signals are treated as auxiliary variables and embedded separately from object-centric latents.
Both are encoded using lightweight temporal embedders implemented as one-dimensional convolutional layers over the temporal axis.
Each embedder maps raw inputs to a 128-dimensional embedding space for C-JEPA, OC-JEPA, and OC-DINO-WM, while using a 10-dimensional embedding space for DINO-WM and DINO-WM with Register.

\section{Predictors}
\label{app:predictor}
\subsection{Design Choice}

We adopt a ViT-style masked predictor rather than an autoregressive Transformer for modeling object-centric dynamics.
In object-centric representations, object states do not evolve as independent first-order Markov processes, but are shaped by interactions that may span multiple time steps.
Autoregressive predictors impose a sequential dependency structure that conditions each prediction on previously generated tokens, which can bias learning toward local self-dynamics.
In contrast, masked prediction allows the model to attend jointly to the entire history window and infer masked object states in parallel.
This architectural choice aligns naturally with the C-JEPA objective, which requires reasoning over interaction-dependent context rather than sequential token generation.

\subsection{Implementation Details}
\label{app:impl}

We maintain object latent states with slot dimensionality 128 throughout the whole experiment.
Future prediction is performed by a Transformer-based predictor with six layers, 16 attention heads, head dimension 64, and an MLP hidden dimension of 2048. Our framework is built on top of \texttt{stable-pretraining}~\cite{spt} and \texttt{stable-worldmodel}~\cite{swm}.

\subsection{Training Details}
\label{app:train}

For Push-T, the model operates on a temporal history window of size three, with frames subsampled using a frame skip of five.
At each time step, the model predicts a single future latent state.
For CLEVRER, the model operates on a temporal history window of size six, with frames subsampled using a frame skip of two, predicting ten future latent states. During training, object-level masking is applied by randomly masking between zero and two object slots for Push-T and between zero and four object slots for CLEVRER.

The model is trained for 30 epochs using the Adam optimizer with a batch size of 256.
The predictor, action encoder, and proprioceptive encoder use a learning rate of $5\times10^{-4}$.
All experiments are conducted on a single GPU, on pre-extracted object embeddings.

\section{Visual Question Answering}
\label{app:vqa}

\subsection{ALOE~\cite{61}} For visual question answering, we adopt ALOE~\cite{61}, a transformer-based model for reasoning over object-centric video representations. ALOE operates directly on object-centric representations extracted from video frames.
Given a sequence of object embeddings over time and a natural-language question, ALOE encodes the question using a language transformer and represents object trajectories as temporally ordered tokens.
A stack of attention layers then performs joint reasoning over objects, time steps, and question tokens, allowing information to be dynamically routed based on the query.
Importantly, ALOE does not rely on explicit symbolic programs or predefined causal graphs. Instead, relational, temporal, and counterfactual reasoning emerge from attention over object-level embeddings conditioned on the question.

\paragraph{Training Details on CLEVRER.}
We train ALOE for visual question answering on the CLEVRER dataset using object-centric slot trajectories extracted by the encoder. We follow SlotFormer~\cite{11} for the training configuration.
Each training sample consists of object slot embeddings over $25$ time steps, together with a natural-language question and multiple answer choices.
We use a maximum of $6$ objects per scene, with each slot represented by a $128$-dimensional feature vector.
Object order is fixed across time to preserve temporal consistency.

Questions and answer choices are tokenized with maximum lengths of $20$ and $12$, respectively.
Object trajectories, question tokens, and answer tokens are concatenated into a single token sequence and processed by a transformer encoder with $12$ layers, $8$ attention heads, and a feedforward dimension of $512$.
Learnable positional embeddings are used.
The transformer operates on a shared embedding space of dimension $16$, followed by an MLP classifier with hidden size $128$ for answer prediction.
The model is trained using the Adam optimizer with an initial learning rate of $10^{-3}$ and a cosine decay schedule with linear warmup over the first $10\%$ of training steps.
Training is performed for $400$ epochs following SlotFormer.

\section{Model Predictive Control Planning Details}
\label{app:mpc_planning}

We describe the implementation details of the model predictive control (MPC) pipeline used
for goal-conditioned planning with learned world models.
Planning is performed in the latent space induced by the learned world model.
Given the current observation and a goal observation, the planner optimizes a sequence of future
actions by rolling out the world model and minimizing a goal-conditioned cost.
Action optimization is carried out using the Cross-Entropy Method (CEM), and execution follows a receding-horizon MPC scheme.

\paragraph{Planning Configuration.}
In all experiments, we set the planning horizon to $H=5$ and use an action block size of $B=5$,
resulting in a total action sequence length of $L = H \times B = 25$.
The receding horizon is set equal to the planning horizon ($R=5$), such that the full optimized
action sequence is executed before replanning.
The evaluation budget is fixed to 50 environment steps per episode, and the goal observation is defined as the state occurring 25 steps after the initial frame sampled from the dataset.

\paragraph{Cost Function.}

The planning objective minimizes the cost between the predicted last state and the goal state in the latent space. The cost used is the $l_2$-loss. Analogous costs are computed for prospective embeddings, which are then added to the original cost. 
A rollout is considered successful if the final position error is less than 20 (within a workspace range of $(0, 450)$) and the final orientation error is smaller than $\pi/9$.

\paragraph{Action Optimization via CEM.}

Action sequences are optimized using the Cross-Entropy Method.
At each replanning step, CEM samples 300 candidate action sequences from a Gaussian distribution and evaluates their quality from the cost predicted by the world model. At each step, the 30 candidates with the lowest cost are kept as elites to update the Gaussian parameters (mean and variance) used for action sampling. The optimization procedure is repeated for 30 iterations, and the final mean of the distribution is used as the optimized action sequence.

\paragraph{Hungarian Matching for Object-Centric Models.}
For object-centric world models, the ordering of object slots is not often perfectly guaranteed to be consistent across time steps or between predicted and goal states.
To ensure a meaningful comparison between object-centric latent states, we apply Hungarian matching when computing rollout trajectories and planning costs.
Specifically, at each predicted time step, object slots are matched to goal slots by minimizing the pairwise $\ell_2$ distance in latent space.
The planning cost is then computed after alignment, using the matched object representations.
This matching is applied consistently for all object-centric models during MPC planning and does not affect patch-based baselines, where token ordering is fixed.

\section{Baselines}

\begin{figure}[h]
\centering
\includegraphics[width=\textwidth]{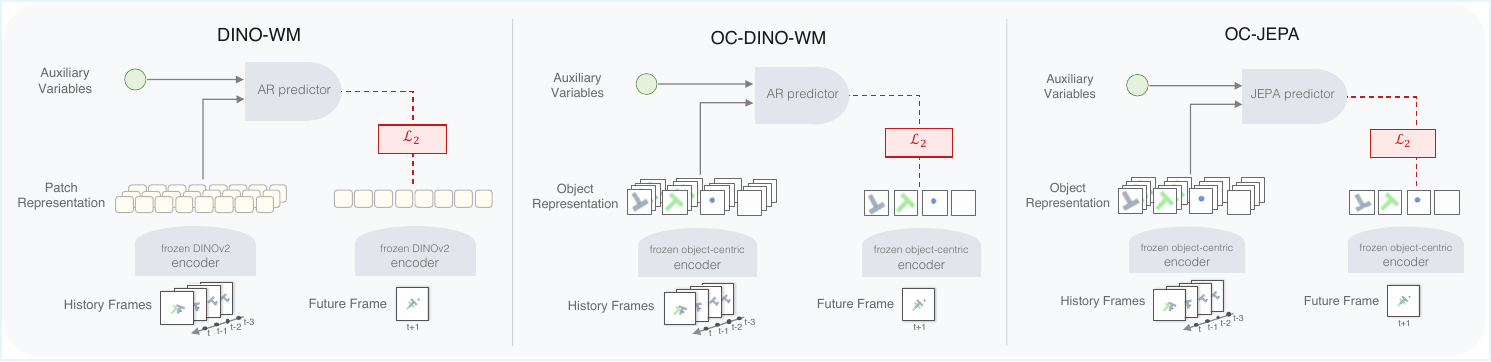}
\caption{
Diagram of DINO-WM~\cite{16}, OC-DINO-WM, and OC-JEPA
}
\label{fig:basetraining}
\end{figure}
\label{app:baselines}

\paragraph{SlotFormer~\cite{11}} \
We reproduce SlotFormer on CLEVRER using the best-performing configuration reported in the original work. Training is performed on object-centric slots extracted by a pretrained SAVi~\cite{9} encoder.
Each training sample consists of $16$ frames in total, including $6$ burn-in frames and $10$ rollout frames, with temporal subsampling by a factor of two.
The dynamics model is implemented as a Transformer-based rollouter with four layers, eight attention heads, and a model dimension of $256$.
Sinusoidal temporal positional encodings are applied, while no slot positional encodings are used.

Training is conducted for $30$ epochs (approximately $450$k steps) using the Adam optimizer with a learning rate of $2\times10^{-4}$.
A cosine learning-rate schedule with linear warmup over the first $5\%$ of training steps is applied.
No weight decay or gradient clipping is used.
We use batch size $32$ per GPU, using two Nvidia RTX A6000 GPUs.
The default training objective includes both slot-level reconstruction loss and image reconstruction loss, each weighted equally.
Image reconstruction is performed using a CNN decoder initialized from a pretrained SAVi checkpoint.
All images are resized to a resolution of $64\times64$.

\paragraph{OCVP-Seq~\cite{2}} We reproduce OCVP on CLEVRER using the official implementation and training protocol. OCVP is also trained on top of pretrained SAVi~\cite{9} slot representations, which are kept fixed throughout training.
Video clips are constructed from CLEVRER sequences resized to a spatial resolution of $64\times64$, with frames temporally subsampled by a factor of two.
Each training clip consists of 6 context frames followed by 10 future frames to predict, resulting in a total sequence length of 16 frames.

OCVP-Seq implemented as a Transformer with 2 layers, 4 attention heads, and a hidden dimension of 256.
The internal token dimension is set to 128, and residual connections are used around the predictor.
Autoregressive prediction is performed with an input buffer size of 6 slots.
The model is trained for 30 epochs using the AdamW optimizer with a learning rate of $1\times10^{-4}$ and no weight decay.
A cosine learning-rate schedule with a warmup of 5 epochs is applied, and the minimum learning rate is set to $1\times10^{-6}$.
Training uses a batch size of 64 and mixed-precision on 3 Nvidia RTX A6000 GPUs.
The training objective consists of a mean squared error loss on predicted slot representations.
Validation is performed at the end of every epoch, and all reported results follow the same evaluation protocol as other object-centric baselines.

\paragraph{DINO-WM~\cite{16} and DINO-WM with Register~\cite{19}}
We train DINO-WM and its register-augmented variant on the Push-T manipulation task following the standard DINO-WM training protocol.
Both models share the same architecture and hyperparameters, differing only in the use of register tokens during visual encoding for DINO-WM with Register.

Input frames are tokenized into non-overlapping $16\times16$ patches.
Frames are temporally subsampled with a stride of five.
Each training sequence consists of a history window of three frames followed by one prediction step, resulting in a total sequence length of four frames.
Visual observations are encoded using a frozen DINOv2~\cite{17} backbone, producing patch-level embeddings that are passed to an autoregressive Transformer-based predictor.
The predictor consists of 6 Transformer layers with 16 attention heads, a feedforward dimension of 2048, and a per-head dimensionality of 64.
Dropout with rate 0.1 is applied within the predictor, while embedding dropout is disabled.
Actions and proprioceptive signals are embedded using separate encoders as described in Appendix~\ref{app:encoder_swm} with embedding dimensionality 10 and provided as auxiliary inputs to the predictor.
Models are trained for 10 epochs as training more epochs did not improve the performance and often degrades the performance. We use a batch size of 64.
Optimization is performed using Adam with a learning rate of $5\times10^{-4}$ for the predictor, action encoder, and proprioceptive encoder.
All experiments use a fixed random seed of 42.
Unless otherwise specified, all remaining settings follow the original DINO-WM implementation.

\paragraph{OC-DINO-WM}
OC-DINO-WM replaces the patch-based visual encoder of DINO-WM with an object-centric encoder based on VideoSAUR.
VideoSAUR also operates on a frozen DINOv2 ViT-S/14 backbone, ensuring compatibility with DINO-WM in terms of visual representation.
All other components, including the predictor architecture, auxiliary input handling, and training protocol, are kept identical to DINO-WM.
The model is trained for 30 epochs on Push-T.

\paragraph{OC-JEPA}
OC-JEPA is a variant of C-JEPA with history masking disabled ($|\mathcal{M}| = 0$).
This baseline isolates the effect of object-centric representations from that of the masking objective, allowing us to assess whether the performance gains of C-JEPA arise from object-centric encoding alone or from object-level history masking.
OC-JEPA shares the same architecture and training setup as C-JEPA, differing only in the masking configuration.
Details of the predictor architecture and masking scheme are provided in Appendix~\ref{app:predictor}.

\section{Full Results}
\label{app:full}
\begin{table*}[t!]
\ttabbox[]
  {\centering
   \tabcolsep=0.07cm
    \renewcommand{\arraystretch}{0.8}
    \small
\centering
\caption{VQA accuracy under varying numbers of masked objects.
(V) uses VideoSAUR and (S) uses SAVi encoders.
Performance changes relative to the unmasked baseline are indicated by arrows.}
\label{tab:clevrer_nummask_full}
\renewcommand{\arraystretch}{1.2}

\begin{tabular}{@{}llllllllll@{}}
\toprule
\multirow{2}{*}{Model}      
& \multirow{2}{*}{$|\mathcal{M}|$} 
& \multirow{2}{*}{Average per que. (\%)} 
& \multicolumn{2}{c}{Counterfactual (\%)} 
& \multicolumn{2}{c}{Explanatory (\%)} 
& \multicolumn{2}{c}{Predictive (\%)} 
& \multirow{2}{*}{Descriptive (\%)} \\ 
\cmidrule(lr){4-9}
& & 
& per opt.           
& per que.            
& per opt.          
& per que.          
& per opt.         
& per que.          
&  \\ 
\midrule
OC-JEPA (V)                    
& 0                        
& 82.79                                  
& 79.53              
& 47.68              
& 92.88             
& 80.58            
& 86.15            
& 75.04            
& 89.59                             \\ 
\midrule
\multirow{4}{*}{C-JEPA (V)} 
& 1                        
& 83.95~\up{1.16}                                  
& 80.34~\up{0.81}              
& 49.67~\up{1.99}              
& 93.41~\up{0.53}             
& 82.22~\up{1.64}            
& 88.28~\up{2.13}            
& 78.58~\up{3.54}            
& 90.39~\up{0.80}                             \\
& 2                        
& 84.56~\up{1.77}                                  
& 80.61~\up{1.08}              
& 50.25~\up{2.57}              
& 93.53~\up{0.65}             
& 82.54~\up{1.96}            
& 88.68~\up{2.53}            
& 79.56~\up{4.52}            
& 91.02~\up{1.43}                             \\
& 3                        
& 87.61~\up{4.82}                                  
& 86.49~\up{6.96}              
& 63.60~\up{15.92}             
& 95.35~\up{2.47}             
& 87.30~\up{6.72}            
& 91.16~\up{5.01}            
& 83.86~\up{8.82}            
& 91.98~\up{2.39}                             \\
& 4                        
& \textbf{89.40}~\up{6.61}                         
& \textbf{88.67}~\up{9.14}     
& \textbf{68.81}~\up{21.13}    
& \textbf{96.62}~\up{3.74}    
& \textbf{90.74}~\up{10.16}   
& \textbf{93.03}~\up{6.88}   
& \textbf{86.93}~\up{11.89}  
& \textbf{92.84}~\up{3.25}                    \\ 
\midrule
OC-JEPA (S)                    
& 0                        
& 77.28                                  
& 76.69              
& 41.10              
& 91.20             
& 76.04            
& 83.48            
& 70.51            
& 84.05                             \\ 
\midrule
\multirow{4}{*}{C-JEPA (S)} 
& 1                        
& 78.39~\up{1.11}                                  
& 77.26~\up{0.57}              
& 43.13~\up{2.03}              
& 91.49~\up{0.29}             
& 77.14~\up{1.10}            
& 83.67~\up{0.19}            
& 70.19~\down{0.32}           
& 85.09~\up{1.04}                             \\
& \textbf{2}               
& \textbf{83.88}~\up{6.60}                         
& \textbf{85.16}~\up{8.47}     
& \textbf{60.19}~\up{19.09}    
& \textbf{95.34}~\up{4.14}    
& \textbf{87.27}~\up{11.23}   
& \textbf{87.46}~\up{3.98}   
& \textbf{77.25}~\up{6.74}   
& \textbf{87.81}~\up{3.76}                    \\
& 3                        
& 79.02~\up{1.74}                                  
& 77.78~\up{1.09}              
& 43.77~\up{2.67}              
& 92.51~\up{1.31}             
& 79.94~\up{3.90}            
& 81.42~\down{2.06}           
& 67.18~\down{3.33}           
& 85.62~\up{1.57}                             \\
& 4                        
& 73.28~\down{4.00}                                
& 73.55~\down{3.14}            
& 34.06~\down{7.04}            
& 89.85~\down{1.35}           
& 72.72~\down{3.32}           
& 77.23~\down{6.25}           
& 59.89~\down{10.62}          
& 80.88~\down{3.17}                             \\ 
\bottomrule
\end{tabular}

      }%
    {}%
\end{table*}%

Tab.~\ref{tab:clevrer_nummask_full} shows VQA performance under different numbers of masked objects.
Introducing object-level masking consistently improves performance over the unmasked baseline, with especially large gains on counterfactual questions.
These results indicate that masking object histories provides a meaningful training signal and strengthens interaction-dependent reasoning.

\begin{table*}[h]
\ttabbox[]
  {\centering
   \tabcolsep=0.09cm
    \renewcommand{\arraystretch}{0.8}
    \small
\centering
\caption{Baseline comparison using a SAVi encoder. For models with and without reconstruction, arrows indicate differences relative to the reconstruction-based variant.}
\label{tab:clevrer_main_full}
\renewcommand{\arraystretch}{1.2}

\begin{tabular}{@{}lllllllll@{}}
\toprule
\multirow{2}{*}{Model}           
& \multirow{2}{*}{Average per que. (\%)} 
& \multicolumn{2}{c}{Counterfactual (\%)} 
& \multicolumn{2}{c}{Explanatory (\%)} 
& \multicolumn{2}{c}{Predictive (\%)}  
& \multirow{2}{*}{Descriptive (\%)} \\ 
\cmidrule(lr){3-8}
& 
& per opt.           
& per que.            
& per opt.          
& per que.          
& per opt.         
& per que.          
&  \\ 
\midrule
SlotFormer              
& 79.44                             
& 79.28            
& 47.29           
& 92.84          
& 80.96          
& 83.66          
& 70.79          
& 85.22                        \\
\textit{(-) recon. loss}   
& 44.94~\down{34.50}                            
& 55.62~\down{23.66}            
& 11.10~\down{36.19}           
& 67.70~\down{25.14}          
& 27.89~\down{53.07}          
& 52.52~\down{31.14}          
& 23.83~\down{46.96}          
& 54.67~\down{30.55}                        \\ 
\midrule
OCVP-Seq                    
& 83.11                             
& 83.21            
& 56.06           
& 94.38          
& 85.37          
& 86.34          
& 75.52          
& \textbf{87.85}                        \\
\textit{(-) recon. loss}   
& 80.09~\down{3.02}                             
& 77.46~\down{5.75}            
& 43.00~\down{13.06}           
& 92.70~\down{1.68}          
& 80.38~\down{4.99}          
& 80.72~\down{5.62}          
& 66.08~\down{9.44}          
& 87.24~\down{0.61}                        \\ 
\midrule
OC-JEPA
& 77.28                             
& 76.69            
& 41.10           
& 91.20          
& 76.04          
& 83.48          
& 70.51          
& 84.05                        \\ 
\midrule
C-JEPA
& \textbf{83.88}                    
& \textbf{85.16}   
& \textbf{60.19}  
& \textbf{95.34} 
& \textbf{87.27} 
& \textbf{87.46} 
& \textbf{77.25} 
& 87.81               \\ 
\bottomrule
\end{tabular}
      }%
    {}%
\end{table*}%

Tab.~\ref{tab:clevrer_main_full} compares object-centric baselines using a shared SAVi encoder.
C-JEPA achieves the best overall performance across all question categories, with particularly strong gains on counterfactual and predictive questions.
Notably, unlike prior approaches, C-JEPA attains these improvements without relying on reconstruction losses, suggesting that object-level masked prediction provides a more effective supervisory signal for interaction reasoning.

\section{Qualitative Analysis on PHYRE}
\label{app:phyre}

\begin{figure}[h]
    \centering
    \includegraphics[width=\linewidth]{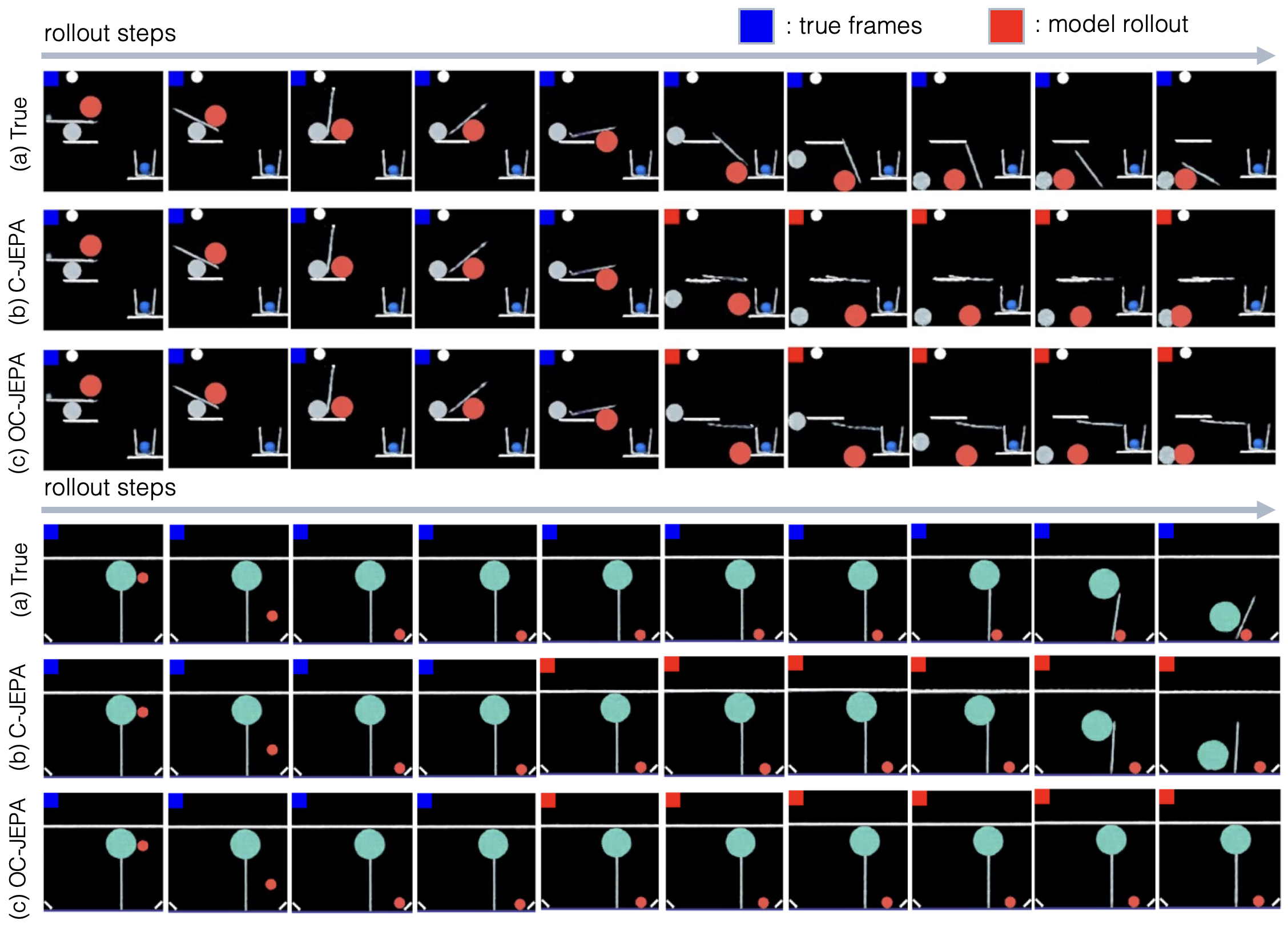}
    \caption{
    Qualitative rollout analysis on PHYRE.
    We compare imagined future trajectories produced by OC-JEPA and C-JEPA in representative multi-body interaction cases. Please refer to \href{https://github.com/galilai-group/cjepa/tree/main/dataset/phyre}{https://github.com/galilai-group/cjepa/tree/main/dataset/phyre} for better visualization.
    }
    \label{fig:phyre_rollout}
\end{figure}

\begin{figure}[t]
    \centering
    \includegraphics[width=\linewidth]{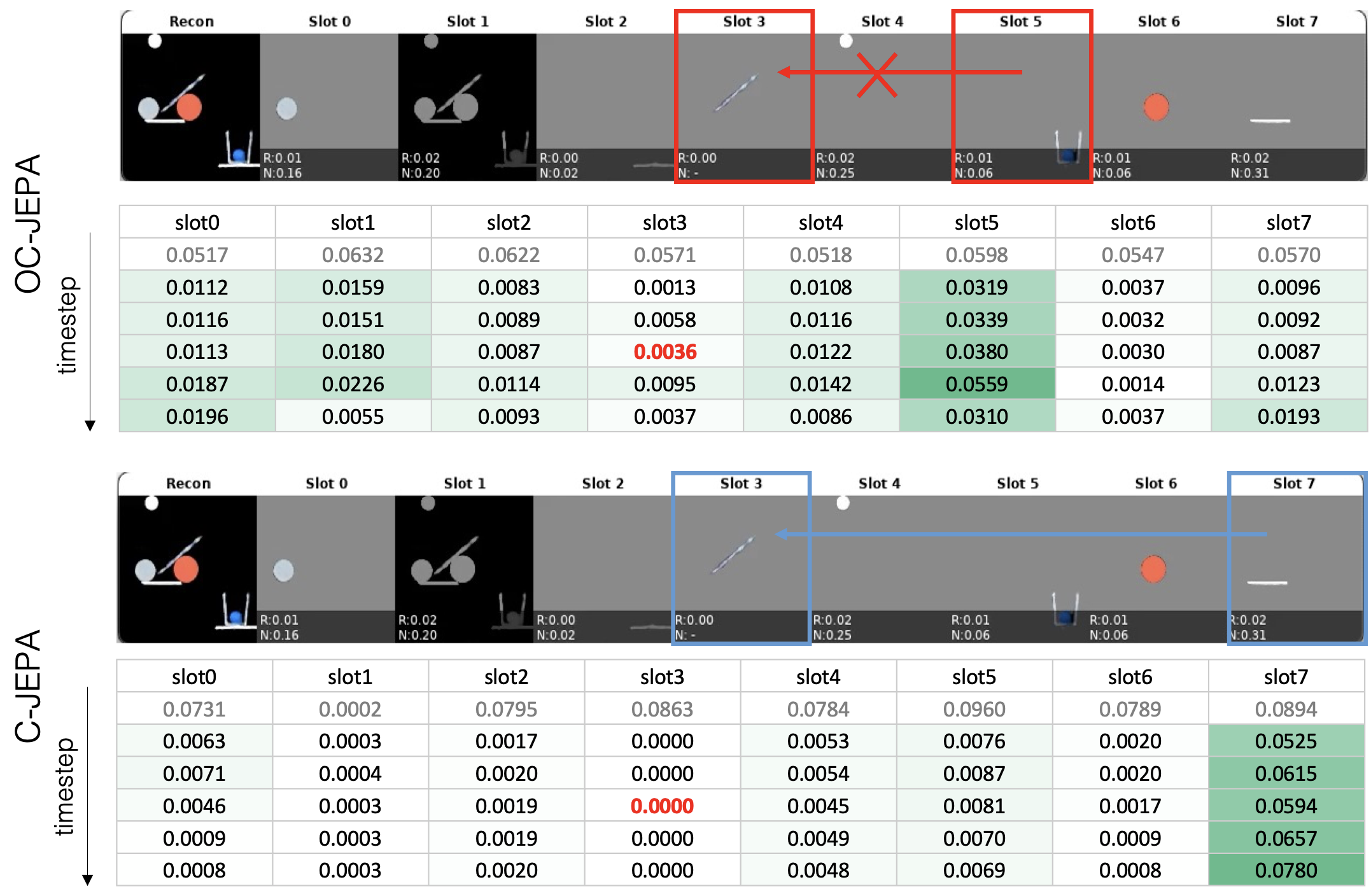}
    \caption{
    Attention-based dependency analysis on PHYRE.
    We visualize cross-slot attention patterns in the predictor as a qualitative proxy for local temporal interaction structure.
    Cells marked in red indicate the query token corresponding to the target slot.
    }
    \label{fig:phyre_attention}
\end{figure}

We conduct additional qualitative case studies on PHYRE~\cite{phyre} to examine whether object-level masking remains useful in more complex physical environments.
PHYRE requires reasoning about temporally extended physical dependencies, including gravity, momentum transfer, and multi-body collisions.
Unlike CLEVRER, PHYRE does not provide ground-truth temporal local causal graphs, and therefore we do not evaluate causal graph recovery.
Instead, following the perspective of SPARTAN~\cite{3}, we analyze the predictor's cross-slot attention patterns as a qualitative proxy for local temporal interaction structure.
We complement this analysis with imagined rollout trajectories to assess whether the learned world model produces physically plausible futures.

\paragraph{Rollout plausibility.}
We first compare imagined rollouts produced by C-JEPA and OC-JEPA, where OC-JEPA denotes the same architecture trained without object-level masking over the history.
Representative examples are shown in Fig.~\ref{fig:phyre_rollout}.
Across these cases, OC-JEPA more often produces physically implausible rollouts, suggesting that the unmasked objective can fail to propagate interaction effects over time.
For example, when an early collision should influence the subsequent motion or fall of another object, OC-JEPA may fail to reflect this dependency in the imagined future.
In contrast, C-JEPA better preserves the qualitative consequences of object interactions.
Even when the predicted future does not exactly match the ground-truth trajectory, the rollout remains more physically plausible, indicating that object-level masking helps the predictor rely on contextual physical dependencies rather than only object self-dynamics.

\paragraph{Attention-based dependency proxy.}
We next analyze cross-slot attention patterns in the predictor.
Because ground-truth temporal local causal graphs are unavailable in PHYRE, these attention maps should not be interpreted as recovered causal graphs.
Rather, they provide a qualitative view of which object slots the predictor consults when predicting a target slot under partial observability.
Following SPARTAN~\cite{3}, we use this cross-slot dependency pattern as a proxy for local temporal interaction structure.

Fig.~\ref{fig:phyre_attention} shows a representative case in which the presence of a particular object slot is important for predicting the target slot.
In this example, OC-JEPA assigns attention more diffusely and places relatively high attention on less relevant slots.
By contrast, C-JEPA concentrates attention more sharply on the slot visually involved in the relevant interaction, indicating that the masked objective encourages the predictor to use interaction-relevant context when recovering masked object states.
This behavior is consistent with the role of object-level masking: by removing direct access to selected object histories during training, the predictor is encouraged to infer missing object states from other objects whose states are informative under the physical dynamics.

\paragraph{Discussion.}
These qualitative observations suggest that object-level masking can remain useful beyond the primary benchmark settings considered in the main text.
In PHYRE, C-JEPA produces more physically plausible imagined rollouts and exhibits sharper, more interpretable cross-slot attention patterns than its unmasked counterpart.
This analysis should be interpreted as evidence of interaction-aware predictive dependency modeling, rather than as evidence of formal causal graph identification.
\section{Exploring Different Masking Strategies}
\label{app:masking_strategies}

\begin{table*}[h]
\ttabbox[]
  {\centering
   \tabcolsep=0.2cm
    \renewcommand{\arraystretch}{0.8}
    \small
\centering
\caption{Effect of masking strategies and masking budgets on CLEVRER performance.
For token- and tube-level masking, the budget is reported as a percentage of masked tokens.
For object-level masking (C-JEPA), the budget is reported as the number of masked objects out of a total of seven.}
\label{tab:clevrer_method}
\renewcommand{\arraystretch}{1.2}

\begin{tabular}{@{}cccccccccc@{}}
\toprule
\multirow{2}{*}{Masking Method}      
& \multirow{2}{*}{$|\mathcal{M}|/7$} 
& \multirow{2}{*}{Average per que. (\%)} 
& \multicolumn{2}{c}{Counterfactual (\%)} 
& \multicolumn{2}{c}{Explanatory (\%)} 
& \multicolumn{2}{c}{Predictive (\%)} 
& \multirow{2}{*}{Descriptive (\%)} \\ 
\cmidrule(lr){4-9}
                                &                       &                                   & per opt.         & per que         & per opt.        & per que       & per opt.       & per que       &                              \\ \midrule
\multirow{4}{*}{Object}         & 1/7                     & 83.95                             & 80.34            & 49.67           & 93.41           & 82.22         & 88.28          & 78.58         & 90.39                        \\
                                & 2/7                     & 84.56                             & 80.61            & 50.25           & 93.53           & 82.54         & 88.68          & 79.56         & 91.02                        \\
                                & 3/7                     & 87.61                             & 86.49            & 63.60& 95.35           & 87.30& 91.16          & 83.86         & 91.98                        \\
                                & 4/7            & 89.40& 88.67            & 68.81           & 96.62           & 90.74         & 93.03          & 86.93         & 92.84                        \\ \midrule
\multirow{4}{*}{Token}          & 14\%                     & 85.69                             & 83.46            & 56.92           & 94.76           & 85.82         & 86.81          & 76.13         & 91.18                        \\
                                & 28\%                    & 87.83                             & 87.46            & 65.82           & 95.85           & 88.64         & 89.60& 80.80& 91.89                        \\
                                & 42\%                     & 84.43                             & 80.69            & 51.08           & 93.77           & 82.87         & 86.46          & 75.54         & 90.90\\
                                & 56\%                     & 89.32                             & 88.74            & 68.88           & 96.70& 90.87         & 90.31          & 82.20& 93.01                        \\ \midrule
\multirow{4}{*}{Tube}           & 14\%                    & 86.62                             & 84.05            & 57.83           & 94.86           & 86.00& 88.77          & 79.56         & 92.06                        \\
                                & 28\%                      & 87.87                             & 85.42            & 61.71           & 95.67           & 88.43         & 89.18          & 80.01         & 92.74                        \\
                                & 42\%                     & 88.94                             & 88.10& 67.21           & 96.30& 89.83         & 91.05          & 83.50& 92.84                        \\
                                & 56\%                   & 89.46                             & 89.25            & 69.81           & 97.06           & 91.61         & 90.68          & 82.88         & 92.90\\ \bottomrule
\end{tabular}

      }%
    {}%
\end{table*}%
\begin{table*}[h]
\ttabbox[]
  {\centering
   \tabcolsep=0.6cm
    \renewcommand{\arraystretch}{0.8}
    \small
\centering
\caption{Effect of masking strategies and masking budgets on CLEVRER performance.
For token- and tube-level masking, the budget is reported as a percentage of masked tokens.
For object-level masking (C-JEPA), the budget is reported as the number of masked objects out of a total of four.}
\label{tab:pusht_masking}
\renewcommand{\arraystretch}{1.2}

\begin{tabular}{@{}cccc@{}}
\toprule
\multicolumn{1}{c}{$|\mathcal{M}|/4$} & Object& \multicolumn{1}{c}{Token} & \multicolumn{1}{c}{Tube} \\ \midrule
1/4 (25\%)& \multicolumn{1}{c}{88.67\% } & 84.67\%                    & 55.33\%                \\ \midrule
2/4 (50\%)& \multicolumn{1}{c}{82.67\% } & 84.00\%                & 5.33\%                  \\ \bottomrule
\end{tabular}

      }%
    {}%
\end{table*}%

We consider three masking strategies that differ in the structural unit being masked.
\begin{itemize}
    \item \textbf{Object-level masking.} Entire object-centric slots are masked, requiring the model to infer the masked object’s latent trajectory from the remaining objects. The masking budget is specified as the number of masked objects out of seven.
    \item \textbf{Token-level masking.} A random subset of individual latent tokens is masked, following standard masked modeling practices. The masking budget is reported as the percentage of masked tokens.
    \item \textbf{Tube-level masking.} Contiguous spatio-temporal tubes of latent tokens are masked, enforcing temporal consistency within masked regions. The masking budget is again reported as the percentage of masked tokens.
\end{itemize}
For token- and tube-level masking, we match the masking budget to object-level masking by masking an equivalent proportion of latent tokens, and for tube masking we further set the number of masked tubes to correspond to the size of the object masking index set.

\paragraph{Results.}
Tab.~\ref{tab:clevrer_method} compares object-level masking with token- and tube-level masking under matched masking budgets on CLEVRER.
Across masking strategies, performance generally improves as the masking budget increases, indicating that masking can act as an effective regularization signal.
Rather than viewing object-, token-, and tube-level masking as fundamentally different mechanisms, they can be interpreted as variants that differ primarily in granularity and mask shape.
Since masking is applied within a relatively small $T \times N$ latent space in our setting, the performance differences between these strategies are correspondingly limited.
However, token- and tube-level masking exhibit higher sensitivity to the masking budget and lead to less consistent improvements.
This is because, unlike object-level masking, random masking at the token or tube level can introduce unintended combinations of missing information, such as simultaneously masking multiple objects or all object tokens at a given time step.
As a result, the induced prediction task may vary substantially across masking instances, making it harder to consistently enforce interaction-dependent reasoning. 

The same trend is reflected in control settings.
As shown in Tab.~\ref{tab:pusht_masking} (Push-T), object-level masking maintains robust planning performance even at higher masking ratios, whereas tube-level masking leads to severe performance degradation under comparable budgets, and token-level masking provides limited or inconsistent benefits.
These results indicate that masking entire objects, rather than spatiotemporal tubes or individual latent tokens, is critical for inducing meaningful interaction-aware learning signals without destabilizing prediction or control.

\section{Assumptions}
\label{sec:appendix_assumptions}

\textbf{Assumption 1} (Temporally Directed Predictive Dependencies)
\textit{Object-level state transitions are governed by time-directed predictive dependencies. The future state of an object is determined by past object observation and auxiliary variables without requiring instantaneous causal effects within the same time step.}

\textbf{Discussion.}
This assumption rules out within-timestep causal cycles and provides a well-defined temporal direction for interpreting predictive influence.
It is consistent with standard discrete-time modeling, where causal effects are attributed to variables available up to time $t$ and manifest in states at later times.

Importantly, the assumption does not prohibit the predictor from using same-time-step observations during \emph{state completion}.
Under object-level masking, the model may condition the reconstruction of a masked token at time $t$ on other objects observed at the same $t$.
Such within-step conditioning should be understood as completing missing information under partial observability, rather than positing instantaneous causal generation among objects.

Accordingly, we define \emph{causal influence} operationally through time-lagged predictive dependencies: variables that help predict future states from past observations and auxiliary inputs.
This avoids over-interpreting attention weights as instantaneous causal edges while remaining compatible with attention-based predictors that mix information within a window.

\textbf{Assumption 2} (Shared Transition Mechanism)
\textit{Latent state transitions are governed by a shared mechanism. Although object states vary across time and episodes, the conditional distribution of future states given a finite history remains invariant across trajectories.}

\textbf{Discussion.}
This is the standard stationarity assumption underlying learned world models: a single predictor is trained to approximate the same conditional dynamics across episodes.
It allows pooling experience to learn transferable structure and supports evaluation under rollouts that recombine histories and actions.

The assumption is compatible with diverse trajectories because it constrains only the conditional distribution given history, not the marginal distribution of states.
In practice, moderate nonstationarities (e.g., nuisance variations) are typically absorbed by the encoder and model capacity; severe regime shifts would require explicit conditioning on context or environment identifiers, which is outside our scope.

\textbf{Assumption 3} (Object-Aligned Latent Representation)
\textit{Each slot corresponds to a coherent object-level state variable and these representations provide a sufficient abstraction for reasoning about object dynamics.}

\textbf{Discussion.}
This assumption is a minimal requirement for object-level interventions to be meaningful: masking a slot should correspond to removing access to an object-level variable rather than an arbitrary mixture of entities.
We do not require perfect disentanglement, fixed semantics across scenes, or a one-to-one correspondence with human-defined objects.
Instead, we require sufficient \emph{stability} so that slots behave as coherent carriers of object-level information over time, enabling the predictor to treat them as entities for dynamics modeling.

This is consistent with common practice in slot-based perception.
At the same time, it clarifies a practical limitation: the achievable performance of object-centric world models is bounded by the fidelity of the underlying encoder, and violations of object alignment can weaken the intended intervention effect of masking.

\textbf{Assumption 4} (Finite-History Sufficiency)
\textit{A finite history window of length $T_h$ contains sufficient information for predicting
future object states under the predictive dependencies of the system.}

\textbf{Discussion.}
This assumption reflects the operational constraints of learning and planning, where the model conditions on a bounded context window.
It is weaker than a first-order Markov assumption because it allows higher-order dynamics that require temporal context (e.g., inferring velocity from multiple frames).
In practice, $T_h$ trades off information completeness against computational cost, and the appropriate choice depends on the time scale of interactions and the encoder’s ability to summarize motion cues.

When the true system exhibits longer memory than $T_h$, the predictor can still learn an approximation, but accuracy may saturate.
Our analysis therefore characterizes the inductive bias induced by masked completion \emph{within} the chosen window, which is the regime relevant to the implemented model.

\section{Theoretical Analysis}
\label{app:proofs}

\subsection{Proof of Theorem~\ref{thm:interaction_necessity_mse}}
\begin{proof}[Proof]
\label{app:thm_proof}
Fix an object index $i$ and a time $t$.
Let $Y := z_t^i \in \mathbb{R}^d$ denote the target masked state, and let $X := Z_T^{(-i)}$ denote the observable completion context.
Consider any measurable predictor $h$ that maps $X$ to a prediction $\hat{Y}=h(X)$.
The (unconditional) MSE risk is
\[
\mathcal{R}(h) := \mathbb{E}\!\left[\|Y-h(X)\|_2^2\right].
\]
By the tower property, we can write $\mathcal{R}(h)=\mathbb{E}\big[\mathbb{E}[\|Y-h(X)\|_2^2\mid X]\big]$.
Define $m(X):=\mathbb{E}[Y\mid X]$.
Expanding the squared norm and using $\mathbb{E}[Y-m(X)\mid X]=0$, we obtain the standard conditional risk decomposition:
\begin{align*}
\mathbb{E}\!\left[\|Y-h(X)\|_2^2\mid X\right]
&=
\mathbb{E}\!\left[\|Y-m(X)\|_2^2\mid X\right]
+\|m(X)-h(X)\|_2^2 .
\end{align*}
The first term does not depend on $h$, and the second term is minimized (pointwise in $X$) uniquely by choosing $h(X)=m(X)$ almost surely.
Thus the Bayes-optimal predictor for MSE satisfies
\[
h^*(X)=\mathbb{E}[Y\mid X]
\quad\text{and hence}\quad
\hat{z}_t^{i\,*}=\mathbb{E}\!\left[z_t^i \mid Z_T^{(-i)}\right],
\]
which gives the first equality in Eq.~\eqref{eq:optimal_predictor}.

It remains to show that conditioning on $Z_T^{(-i)}$ is equivalent to conditioning on the influence neighborhood $\mathcal{N}_t(i)$.
By Definition~\ref{def:influence_neighborhood}, $\mathcal{N}_t(i)\subseteq Z_T^{(-i)}$ and
\[
p\!\left(z_t^i \mid Z_T^{(-i)}\right)=p\!\left(z_t^i \mid \mathcal{N}_t(i)\right).
\]
Therefore, for any integrable function $\varphi$,
\[
\mathbb{E}\!\left[\varphi(z_t^i)\mid Z_T^{(-i)}\right]
=
\mathbb{E}\!\left[\varphi(z_t^i)\mid \mathcal{N}_t(i)\right]
\quad\text{a.s.}
\]
Taking $\varphi$ to be the identity function on $\mathbb{R}^d$ yields
\[
\mathbb{E}\!\left[z_t^i\mid Z_T^{(-i)}\right]
=
\mathbb{E}\!\left[z_t^i\mid \mathcal{N}_t(i)\right],
\]
which gives the second equality in Eq.~\eqref{eq:optimal_predictor}.

Finally, let $h$ be any predictor that fails to utilize information in $\mathcal{N}_t(i)$.
Then there exists a set of contexts with nonzero probability on which $h(X)\neq h^*(X)=\mathbb{E}[Y\mid X]$,
implying $\|m(X)-h(X)\|_2^2>0$ on that set.
By the decomposition above, $\mathcal{R}(h)>\mathcal{R}(h^*)$, so $h$ cannot attain the minimum achievable expected reconstruction error under masked completion.
\end{proof}

\subsection{Interpretation of Theorem~\ref{thm:interaction_necessity_mse}.}
Theorem~\ref{thm:interaction_necessity_mse} characterizes predictive sufficiency under masked observability, rather than recovery of true causal interactions.
A predictively sufficient influence neighborhood may align with a structurally meaningful interaction set when the object-centric latents are well aligned with physical entities, the relevant interaction variables are observed within the history window, and spurious correlations are insufficient to predict the masked state.
Under latent confounding or partial observability, however, the influence neighborhood may also include variables that are informative without being direct causal parents.

\subsection{Why object-level trajectory masking suppresses shortcuts.}
Complete object-level trajectory masking removes access to the target object's history over the temporal window, while retaining only a minimal identity anchor.
This differs from token- or tube-level masking, where nearby observations of the same object may still allow the predictor to recover the missing state through temporal interpolation or self-dynamics.
By suppressing these same-object shortcuts, object-level trajectory masking makes contextual information from other objects more important for minimizing the masked prediction loss.

\subsection{Justification of Corollary~\ref{cor:stable_neighborhood}}
\label{app:cor_proof}

We provide a justification for Corollary~\ref{cor:stable_neighborhood}, which states that
optimizing the masked prediction objective induces attention patterns aligned with the
influence neighborhood.

Consider an attention-based predictor in which each predicted object state is computed by
aggregating information from input variables through learned attention weights.
Under object-level masking, Theorem~\ref{thm:interaction_necessity_mse} implies that variables
outside the influence neighborhood $\mathcal{N}_t(i)$ are conditionally uninformative for
predicting the masked state $z_t^i$.
Consequently, assigning attention to such variables cannot reduce the Bayes risk.

In an expressive attention-based model class, there therefore exist optimal solutions that
concentrate attention on variables within $\mathcal{N}_t(i)$ while assigning negligible
weight to uninformative variables.
Repeated exposure to diverse masking patterns acts as a collection of latent interventions,
favoring predictive dependencies that remain stable across interventions.
This gives rise to state-dependent and soft relational patterns aligned with the
intervention-stable influence neighborhood.

This interpretation is closely related to prior work on invariant causal prediction~\cite{27}
and invariant risk minimization~\cite{28}, which study predictors whose dependencies remain
stable across environments or interventions.
In C-JEPA, object-level masking plays an analogous role by inducing intervention-stable
predictive dependencies in latent space, without explicitly estimating a causal graph.

\end{document}